\newcommand{\KL}{D_{\mathrm{KL}}}
\theoremstyle{plain}
\newtheorem{theorem}{Theorem}[section]
\newtheorem{proposition}[theorem]{Proposition}
\newtheorem{corollary}[theorem]{Corollary}
\theoremstyle{definition}
\newtheorem{definition}[theorem]{Definition}
\theoremstyle{remark}
\icmltitlerunning{Language Modeling and the Substructure of Grammars}
\begin{document}

\twocolumn[
  \icmltitle{Unraveling Syntax: \\Language Modeling and the Substructure of Grammars}



  \icmlsetsymbol{equal}{*}

  \begin{icmlauthorlist}
    \icmlauthor{Laura Ying Schulz}{equal,eth,mit}
    \icmlauthor{Daniel Mitropolsky}{equal,mit}
    \icmlauthor{Tomaso Poggio}{mit}
  \end{icmlauthorlist}

  \icmlcorrespondingauthor{Daniel Mitropolsky}{mitropol@mit.edu}

  \icmlaffiliation{eth}{ETH Zürich}
  \icmlaffiliation{mit}{Massachusetts Institute of Technology}

  \vspace{-0.5em}
    \begin{center}
    \footnotesize
    $^{*}$Equal contribution \\
    $^{1}$ETH Zürich \\
    $^{2}$Massachusetts Institute of Technology
    \end{center}
    \vspace{0.5em}


  \icmlkeywords{Context-free grammars, Probabilistic context-free grammars, transformer, subgrammar, learning dynamics, formal language, representation space}

  \vskip 0.3in 
  ]

\printAffiliationsAndNotice{\icmlEqualContribution}

\begin{abstract}
While language models achieve impressive results, their \emph{learning dynamics} are far from understood. Many domains of interest -- such as natural language syntax, coding languages, arithmetic -- are captured by context-free grammars (CFGs). In this work, we extend prior work on neural language modeling of CFGs in a novel direction: how language modeling behaves with respect to CFG \emph{substructure}, namely sub\emph{grammars}. We define subgrammars, and prove a set of fundamental theorems connecting language modeling and subgrammars. We show that language modeling loss recurses linearly over its top-level subgrammars; applied recursively, the loss decomposes into losses for ``irreducible" subgrammars. Under additional assumptions, and empirically, parametrized models learn subgrammars in parallel, unlike children who first master simple substructures. We find that subgrammar pretraining can improve final performance, but only for tiny models relative to the grammar, while alignment analyses show that pretraining consistently leads to internal representations that better reflect the grammar’s substructure.
\end{abstract}

\section{Introduction}
Large language models (LLMs) have stunned the world by achieving sophisticated language abilities in the past few years, yet we still do not know \emph{how} they reach such high levels of performance. Little is also known about the \emph{process} of language acquisition. Do LLMs, for example, master simpler substructures before progressing to more complex syntax, as children do? 

A major approach has been to study trained language models (for instance investigating how a trained model analyzes and uses its knowledge of a language during inference). More recently, a small but burgeoning direction has studied how neural architectures learn \emph{Context-Free Grammars} (CFGs), a class of formal languages that broadly captures most domains of interest, such as natural languages and programming languages. The key insight is that by training models on smaller, fully controllable CFGs, training can be efficient, and one can probe for features of CFGs (specific rules, grammar size, etc). These approaches have gained us many valuable insights (as discussed in the Related Work). 

However, two things have been largely unstudied until now. First, little has been shown about the \emph{dynamics} of how models acquire language -- not the static representations or logic of trained models (even in the CFG / synthetic language literature). Second, research studying CFGs has not considered that CFGs as a mathematical object have \emph{substructure}; they decompose into ``subgrammars". Indeed, in analogous research areas that study learning of abstract hypothesis classes such as polynomials, XOR functions, and modular counting, a major focus has been \emph{studying how learning interacts with the substructure} of these function classes (e.g. the monomials that compose polynomials). 

In this work, we advance the study of language modeling of CFGs by analyzing it through the subgrammar structure. Many of our results can also be viewed through the lens of studying the \emph{dynamics} of CFG learning. In Section \ref{sec:loss_and_compositionality}, we begin by defining several notions of \emph{subgrammars}: \emph{inner} subgrammars, corresponding to subtrees of CFG derivations, and \emph{outer} subgrammars, corresponding to simplified versions of the CFG. Our definitions of subgrammars in this way are novel (though related to classic work on the algebra of CFGs), and we believe they are the right notions for studying the substructure of CFGs. The most important contribution of our work is a suite of fundamental theorems showing that the loss of language modeling, or equivalently the Kullback-Leibler (KL) divergence), \emph{obeys a recurrence over the subgrammar structure}. Under additional (strong) assumptions this implies a model learns subgrammars in parallel. Empirically, we show that small transformers trained on CFGs learn \emph{all the subgrammars in parallel}, unlike how children acquire language. 

Next, changing gears in \ref{sec:pretraining} we study whether curriculum learning, by using an inductive bias and training on a subgrammar first, can improve performance: for small models, we show it can. In \ref{sec:activation-space} we use alignment analysis to show, quite definitively, that such pre-training results in very different internal representations of the CFG: it aligns subgrammar strings, and non-subgrammar strings, respectively, resulting in internal representations that reflect the substructure of the CFG. Finally, in Section \ref{sec:deep_recursion} we show experimentally that even models that perform well do not “know” the subgrammar structure perfectly, with the depth of recursion being the main difficulty. 

Our code is publicly available at \url{https://github.com/laschulz/pcfg-transformer-learning}. We provide details on the transformer architecture used in all our experiments in Appendix \ref{app:transformer_architecture}, as well as the definitions of all PCFGs in Appendix \ref{app:grammars}. In general, we focus on scaled-down variants of nanoGPT \citep{karpathy2023nanogpt}, and a small set of hand-designed PCFGs that span a range of structural phenomena. In particular, we focus on factorized subgrammars, shared substructures, hierarchical recursion, and deeply nested dependencies. The setting is intentionally controlled and interpretable, allowing us to probe how models handle compositional structure while retaining exact access to the underlying generative process.

\section{Related Work}


Transformers \citep{vaswani2017attention}, and language models more broadly, have been studied in two predominant research directions: improving training methods \citep{bubeck2023sparks, jaech2024openai, guo2025deepseek} and probing trained models to analyze how knowledge is stored and activated during inference \citep{meng2022locating, geva2021did, dar2022analyzing, ferrando2024information}.
Much less is known about how such models acquire language, though evidence of stage-like acquisition \emph{reminiscent} of child language learning in GPT-2 has been reported \citet{evanson2023language}.

We approach this problem via the surrogate (and theoretically significant in its own right) approach of studying the dynamics of \emph{language models acquiring formal languages}. This complements prior work on gradient-based learning over structured hypothesis classes (e.g. juntas, parities, modular counting) \citep{klivans,telgarsky2016benefits,abbe2024learning,daniely2020learning}. CFGs provide a linguistically motivated setting where recursive structure is explicit, and formal language theory offers a well-developed foundation \citep{cotterell2023formal}.

Formal languages have been used to test neural models, with mixed success. RNNs and LSTMs often fail to learn subregular grammars despite theoretical capacity \citep{avcu2017subregular}, and transformers perform well on many formal languages but struggle with recursion and counter-based mechanisms \citep{bhattamishra2020ability}. Other studies confirm that transformers often fail on deeply nested grammatical structures \citep{lampinen2024can}. Results consistently show that gradient descent, rather than model expressivity, is the limiting factor. 
Theoretically, self-attention has limitations for some long-range dependencies \citep{hahn2020theoretical} despite transformers' expressivity results \citep{perez2021turing,yun2019transformers} (see \citealp{strobl2024formal} for a survery). 


Closest to our work, \citep{cagnetta2024towards} also study transformers trained on PCFGs to relate learning behavior to hierarchical structure in the underlying grammar; however, the main differences are that in their work (1) the PCFGs always produce \emph{finite support} languages (i.e. no recursion) and prediction is always of the \emph{final} token only. As a result of these differences, in striking contrast to our results demonstrating \emph{parallel} learning of subgrammars in autoregressive language modeling, their results show learning occurring in discontinuous ``stages" as the model has enough data to utilize PCFG substructure farther and farther back from the end of the subsequence. Reconciling these approaches into a single, stronger theoretical framework is left to future work. Finally, \citep{allen2023physics} provide a thorough analysis of how trained transformers can implement CFG-like computations and encode rewrite rules in their internal states. Our work builds on this by studying the learning \emph{dynamics}, specifically vis-a-vis subgrammars.

\section{Preliminaries and Definitions}
\subsection{Formal Languages}

\begin{definition}[CFG]
A Context-Free Grammar (CFG) is a tuple $G = (\Sigma, \mathcal{N}, \mathcal{S}, \mathcal{P})$ where $\Sigma$ is a finite set of terminal symbols, $\mathcal{N}$ is a finite set of non-terminal symbols, $S \in \mathcal{N}$ is the designated start symbol, and $\mathcal{P}$ is a finite set of production rules of the form 
$A \rightarrow \alpha$
where $A \in \mathcal{N}$ and $\alpha \in (\mathcal{N} \cup \Sigma)^*$ ($\alpha$ can be the empty string  $\epsilon$).

The language $L_G \subseteq \Sigma^*$ associated with a CFG $G$ is the set of all strings of terminals derived from $S$ using rules in $\mathcal{P}$.
\end{definition}

\begin{definition}[PCFG]
A Probabilistic Context-Free Grammar (PCFG) is a context-free grammar $G = (\Sigma, \mathcal{N}, \mathcal{S}, \mathcal{P})$ augmented with a probability function $\mathcal{W}$ that assigns to each rule $(A \rightarrow \alpha) \in \mathcal{P}$ a probability, such that for each $A \in \mathcal{N}$, $\sum_{\{(A \rightarrow \alpha)~\in~\mathcal{P}\}} \mathcal{W}(A \rightarrow \alpha) = 1$.
\end{definition}

CFGs were originally defined in the context of linguistics \citep{Chomsky1956}, as the vast majority of the syntax of natural languages, as well as the syntax of programming languages and mathematics, can be formulated as CFGs \citep{Shieber, Pullum}. 
Since CFGs capture languages with recursion and embedded structure, there intuitively exists a notion of a \emph{sub}grammar. However, several subtleties crop up when attempting to define subgrammars. We propose two notions of subgrammars, each of independent interest and relevance: one is the grammar of \emph{substrings} of CFG sentences that can be generated from a non-terminal, and the other as a subset of the CFG language generated by a subset of the \emph{rules}. We term these \emph{inner} and \emph{outer} subgrammars respectively. Intuitively, inner subgrammars correspond to \emph{subtrees} of derivations of CFGs, whereas outer subgrammars are a simplified version of the grammar. We will sometimes say \emph{supergrammar} for a bigger grammar containing a subgrammar.

\begin{definition}[Inner Subgrammar]
    An \textit{inner subgrammar} of a PCFG $G = (\Sigma, \mathcal{N}, \mathcal{S}, \mathcal{P}, \mathcal{W})$ is itself a PCFG $G' = (\Sigma', \mathcal{N}', \mathcal{S}', \mathcal{P}', \mathcal{W}')$ such that $\Sigma' \subseteq \Sigma$, $\mathcal{N}' \subseteq \mathcal{N}$, and $\mathcal{P}'$ is the set of all rules expanding non-terminals in $\mathcal{N}'$.
    Finally $\mathcal{W}'$ is the restriction of $\mathcal{W}$ to $\mathcal{P}'$, i.e. renormalized so that for every $A \in \mathcal{N}'$, 
    $\sum_{\{(A \rightarrow \alpha) \in \mathcal{P}'\}} \mathcal{W}'(A \rightarrow \alpha) = 1$ .
\end{definition}

\begin{definition}[Proper Subgrammar]
A \emph{proper} subgrammar is an inner subgrammar $G'$ of a CFG $G$ which which is not the whole grammar (in particular $S \notin \mathcal{N}'$).
\end{definition}

\begin{definition}[Outer Subgrammar]
An outer subgrammar of a PCFG $(\Sigma, \mathcal{N}, \mathcal{S}, \mathcal{P}, \mathcal{W})$ is a PCFG $G' = (\Sigma', \mathcal{N}', \mathcal{S},  \mathcal{P}', \mathcal{W}')$, with $\Sigma' \subseteq \Sigma$, $\mathcal{N}' \subseteq \mathcal{N}$, $\mathcal{P}' \subseteq \mathcal{P}$, where $\mathcal{W}'$ is the renormalized restriction of $\mathcal{W}$ to $\mathcal{P}'$. To be a valid outer subgrammar, $\mathcal{P'}$ must contain at least one rule from $\mathcal{P}$ where the left-hand side is $S$. 

\end{definition}
An outer subgrammar captures the notion of a subset of the \emph{language} generated by a PCFG obtained by keeping a subset of expansions of some non-terminals (including $S$). Every string generated by an outer subgrammar is a valid string of the supergrammar. An outer subgrammar more closely corresponds to the notion of a ``simple" version of a language, whereas inner subgrammars are the inherent \emph{compositional} substructures of a CFG. 

\subsection{Language Modeling}
In this work, all distributions are assumed to be over strings of a finite alphabet $\Sigma$, although most definitions can be extended to arbitrary domains. 

\begin{definition}[Kullback-Leibler Divergence]
Given distributions $P$ and $Q$ over $\Sigma^*$, the Kullback-Leibler (KL) Divergence of $Q$ from $P$ is
$$\mathrm{KL}(P\;\|\;Q) = \sum_{s \in \Sigma^*} P(s) \log\frac{P(s)}{Q(s)}$$
\end{definition}

A \emph{language model} $Q_\theta$ is a function family parametrized by  $\theta$, such that $Q_\theta(x)$ yields a probability distribution over $x \in \Sigma^*$. In this work one can think of all $Q_\theta$ as auto-regressive (though for several theoretical results this is not strictly necessary), meaning $Q_\theta(x)$ is an abstraction of a next-token prediction model, i.e. $Q_\theta(x_1,\ldots,x_n) = \Pi_{i=1}^n Q_\theta(x_i|x_1,\ldots,x_{i-1})$.

In Language Modeling, $Q_\theta$ is optimized with Maximum Likelihood Estimation:
\begin{definition}[Maximum Likelihood Estimation]
Given a target distribution $P$, the Maximum Likelihood Estimator $Q_{\hat{\theta}}$ is parametrized by
$\hat{\theta} = \text{arg max}_\theta \mathcal{L}(\theta)$
where
$$ \mathcal{L}(\theta) = \mathbb{E}_{s\sim P}~[-\log Q_\theta(s)] $$
\end{definition}
Practically, this is done by maximizing the combined likelihood of a set of samples, or equivalently, minimizing the sum of negative log-likelihoods; in the limit, this exactly approaches $\mathcal{L}(\theta)$.

\begin{definition}[Shannon Entropy]
The Shannon Entropy of a probability distribution is 
$$ H(P)=\mathbb{E}_{s\sim P}~[\log P(s)] $$
\end{definition}

\begin{proposition} \label{proposition:lossdecomp}
Given a true distribution $P$ and model $Q_\theta$ parameterized by $\theta$,
$$ \mathcal{L}(\theta) =  \KL(P~\|~Q_\theta) + H(P) $$
\end{proposition}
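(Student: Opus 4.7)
The plan is a direct algebraic expansion of the Kullback--Leibler divergence into two pieces that are immediately identifiable as $\mathcal{L}(\theta)$ and $-H(P)$. This is the standard ``cross-entropy equals entropy plus KL'' identity and amounts to a one-line calculation from the definitions; no properties of CFGs, of $Q_\theta$, or of the autoregressive factorization are used.

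Concretely, I would start from
$\KL(P \| Q_\theta) = \sum_{s \in \Sigma^*} P(s) \log\frac{P(s)}{Q_\theta(s)}$,
split the logarithm of the ratio as $\log P(s) - \log Q_\theta(s)$, and use linearity of the sum (equivalently, linearity of expectation under $s \sim P$) to rewrite the right-hand side as $\mathbb{E}_{s \sim P}[\log P(s)] - \mathbb{E}_{s \sim P}[\log Q_\theta(s)]$. Matching each term against the definitions given just above, the first term is $-H(P)$ (using the standard sign convention for Shannon entropy) and the second term is exactly $\mathcal{L}(\theta)$. Rearranging gives $\mathcal{L}(\theta) = \KL(P \| Q_\theta) + H(P)$, which is the claim.

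There is essentially no technical obstacle: the proposition is a definitional identity and the ``hard step'' is simply keeping the sign of $H(P)$ straight. The only points worth flagging in a write-up are (i) the convention $0 \log 0 = 0$, so that terms with $P(s) = 0$ are handled consistently, and (ii) the degenerate case where $Q_\theta(s) = 0$ for some $s$ in the support of $P$, in which case both $\mathcal{L}(\theta)$ and $\KL(P \| Q_\theta)$ are $+\infty$ by convention and the identity still holds. Outside of this degenerate case, absolute convergence of the sum (which follows from $P$ being a probability distribution on $\Sigma^*$) justifies the splitting and reordering of terms, completing the argument.
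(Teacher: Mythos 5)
Your proof is correct and is essentially the same one-line computation as the paper's: both expand $\KL(P\|Q_\theta)$ (or equivalently $\mathcal{L}(\theta)$, after adding and subtracting $\log P$) and use linearity to split into the cross-entropy and entropy pieces, with no structure of $Q_\theta$ or of CFGs used. One nuance worth noting, which you implicitly handled by invoking the ``standard sign convention'': the paper's preliminaries define $H(P) = \mathbb{E}_{s\sim P}[\log P(s)]$ \emph{without} the customary minus sign, and under that literal definition the identity would read $\mathcal{L}(\theta) = \KL(P\|Q_\theta) - H(P)$ (which is in fact what the last line of the paper's own appendix derivation produces, contradicting the proposition as stated). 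The intended definition is surely $H(P) = -\mathbb{E}_{s\sim P}[\log P(s)]$, under which your argument and the proposition agree; your explicit flagging of the sign convention and of the degenerate cases ($0\log 0$, vanishing $Q_\theta$ on the support of $P$) is a modest improvement over the paper's terser write-up.
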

The proof (given in the Appendix \ref{proof:lossdecomp}) is a straightforward application of the linearity of expectation. In particular, this implies that $\theta$ minimizes $\mathcal{L}(\theta)$ if and only if it minimizes $\KL(P~\|~Q_\theta)$.

\section{The Fundamental Relation of Language Modeling and Subgrammars}
\label{sec:loss_and_compositionality}



\begin{theorem}[Unique decomposition]\label{theorem:DAGdecomp}
Every (P)CFG $G$ can be uniquely decomposed into a hierarchy of its inner subgrammars.

This hierarchical structure can be represented as a directed acyclic graph (DAG) with self-loops, where each node is labeled by the set of non-terminals that generate the corresponding subgrammar.
\end{theorem}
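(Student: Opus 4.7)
The plan is to reduce the theorem to the standard strongly connected component (SCC) decomposition of a suitably defined directed graph on the non-terminals. First, I would construct the \emph{dependency graph} $R = (\mathcal{N}, E)$ in which $(A, B) \in E$ iff there exists a rule $(A \to \alpha) \in \mathcal{P}$ with $B$ appearing in $\alpha$. A simple induction on derivation length shows that the reachable set $\mathrm{reach}(A)$ under $R$ is precisely the smallest \emph{closed} set of non-terminals containing $A$, where closure means that every non-terminal appearing on the RHS of any rule with LHS in the set is itself in the set. By the definition of an inner subgrammar, this closure condition is exactly what is needed for $\mathcal{N}' = \mathrm{reach}(A)$, $\mathcal{S}' = A$, and $\mathcal{P}'$ = all rules with LHS in $\mathcal{N}'$ to yield a well-defined PCFG.

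The key lemma is then: two non-terminals $A, B$ generate inner subgrammars with the same set of non-terminals (and hence the same rule set $\mathcal{P}'$ and renormalized weights $\mathcal{W}'$) iff they lie in the same SCC of $R$. The forward direction follows from $A, B \in \mathrm{reach}(A) = \mathrm{reach}(B)$. For the reverse, if $A \to^* B$ and $B \to^* A$ in $R$, then concatenating derivation paths gives $\mathrm{reach}(A) = \mathrm{reach}(B)$. Hence the canonical inner subgrammars (those with $\mathcal{N}' = \mathrm{reach}(\mathcal{S}')$) stand in bijection with SCCs of $R$, each naturally labeled by the set of member non-terminals of its SCC.

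The DAG then comes for free by passing to the condensation of $R$: nodes are the SCCs $C_1, \ldots, C_k$, and there is an edge $C_i \to C_j$ (for $i \neq j$) iff some $A \in C_i$ has a rule mentioning some $B \in C_j$. The condensation of any directed graph is acyclic, which disposes of all non-self cycles. The only remaining cycles are self-loops, and a self-loop on $C$ is present iff $C$ has at least one internal edge in $R$: either $|C| \geq 2$, or $C = \{A\}$ with a rule $A \to \alpha$ containing $A$ in $\alpha$. These encode the truly \emph{recursive} subgrammars, whereas the acyclic portion captures purely hierarchical composition. Uniqueness follows from the well-known uniqueness of the SCC partition of $\mathcal{N}$ and the fact that the induced inter-SCC edges are determined by $\mathcal{P}$; the weights $\mathcal{W}'$ are then uniquely fixed by the renormalization prescription in the definition of an inner subgrammar.

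The main conceptual subtlety --- rather than a genuine mathematical obstacle --- is the distinction between an inner subgrammar, which carries a designated start symbol, and its underlying rule-set structure. Different choices of $\mathcal{S}'$ within the same SCC yield inner subgrammars sharing all productions but differing in the marked start symbol; the theorem correctly identifies all such ``start-symbol variants'' under a single node, which is exactly why nodes are labeled by entire sets of non-terminals. Once this identification is made explicit, the argument is little more than the SCC theorem applied to $R$, and no genuinely hard step remains.
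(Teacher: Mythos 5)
Your proposal is correct, and it arrives at the same DAG as the paper, but the argument is organized differently. The paper's proof is a top-down recursive construction: starting from $S$, it takes the closure $G_A$ of each non-terminal $A$ appearing on the right-hand side of a rule, merges $A$ into the current node's label when $G_A$ coincides with the current subgrammar, and otherwise recurses on the proper subgrammar $G_A$ as a new child node. You instead characterize the nodes up front as the strongly connected components of the dependency graph $R$ on non-terminals, observe that $A$ and $B$ induce the same inner subgrammar iff $\mathrm{reach}_R(A)=\mathrm{reach}_R(B)$ iff $A,B$ share an SCC, and then read off the DAG-with-self-loops as the condensation of $R$ together with the self-loop condition (an internal edge within an SCC). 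The two constructions produce the same nodes and edges --- the paper's iterated ``closure equals the whole'' merging is precisely discovering SCC membership --- but your route is cleaner in two respects: it makes uniqueness immediate (the SCC partition and condensation are canonical), and it makes the self-loop criterion precise (internal SCC edge, i.e.\ either $|C|\ge 2$ or a one-element SCC with a direct self-reference), whereas the paper states it informally as ``$S$ can generate itself.'' One small caveat worth noting: your SCC decomposition ranges over all of $\mathcal{N}$, whereas the paper's recursion only ever visits non-terminals reachable from $S$; these agree under the standard normalization that every non-terminal is useful, but you should state that assumption (or restrict $R$ to $\mathrm{reach}_R(S)$) to match the paper's hierarchy exactly.
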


The straightforward proof recursively constructs the DAG by first identifying the ``top-level" subgrammars of $G$; see Appendix \ref{proof:DAGdecomp}. While to our knowledge this theorem in its formulation is our own, the nodes of the DAG decomposition correspond to the ``grammatical levels" of a CFG in Gruska's classical work on CFG theory \citep{gruska1971complexity}. 

\subsection{Subgrammars and Language Modeling}
We now study the connection between the subgrammar structure and language modeling. Let $G = (\Sigma, \mathcal{N}, S, \mathcal{P}, \mathcal{W})$ be a PCFG that induces a distribution $P_G$ over $\Sigma^{*}$, and $Q_\theta$ an autoregressive language model trained to approximate $P_G$ (that is, given a sequence in $\Sigma^*$ it outputs a terminal, or EOS). 

For motivation, first consider the simple case where the only expansion of $S$ is $S \rightarrow \alpha A \beta$, where $A$ is some proper subgrammar (does not generate $S$), and $\alpha, \beta \in \Sigma^*$ are strings of terminals. Then, 
$$
\KL(P_G~\|~Q) = \sum_{a \in \Sigma^*} P_G(\alpha a \beta) \log\frac{P_G(\alpha a \beta)}{Q_\theta(\alpha a \beta)}
$$
{\tiny 
\begin{align*}
&= \sum_{a\in \Sigma^*} P_G(\alpha a \beta) \log \frac{P_G(\alpha\mid\epsilon) P_G(a \mid \alpha) P_G(\beta \mid \alpha a)}{Q_\theta(\alpha \mid \epsilon) Q_\theta(a \mid \alpha) Q_\theta(\beta \mid \alpha a)} \\
&= \frac{\log P_G(\alpha \mid \epsilon)}{\log Q_\theta(\alpha \mid \epsilon)} + \sum_{a} P_A(a)\frac{\log P_A(a)}{\log Q_\theta (a \mid \alpha)} + \sum_{a} P_G(a\mid \alpha)\frac{\log P_G(\beta \mid \alpha a)}{\log Q_\theta(\beta \mid \alpha a)}
\end{align*}}

In an abuse of notation, above $P_G(\alpha|\epsilon)$ denotes the probability of a (partial) sequence that starts with $\alpha$, $P_G(a | \alpha)$ the probability of $a$ following $\alpha$, and so on. 
Importantly, the decomposition of $P_G$ and $Q_\theta$ follows from the \emph{subgrammar} structure of $G$ in the case of $P_G$, and from the fact that $Q_\theta$ is \emph{autoregressive} for the $Q_\theta$ terms. In short, \emph{the KL-divergence evaluates to a sum of ``conditioned KL-divergences"} of the subgrammar $A$, of prefix $\alpha$, and of suffix $\beta$. The latter terms can themselves be thought of as simple subgrammars; indeed, we can rewrite $G$ to include two new non-terminals that evaluate to $\alpha$ and $\beta$ respectively (with prob. 1), and we would then have a sum over three ``sub-divergences".

\begin{definition}
Given PCFG distribution $P_G$, arbitrary distribution $Q$ over $\Sigma^*$, and top-level subgrammar $A$ of $G$, define $\KL(P_G~\|~Q)_A$
$$ = \sum_{s \in \Sigma^*} P(s|\epsilon)P_G(A|s) \sum_{a\in \Sigma^*} \KL(P_G~\|~Q(\cdot|s))
$$
\end{definition}

$\KL(R~\|~Q)_A$ can be seen as the ``restriction" of the KL-divergence to the subgrammar $A$ (by averaging over all contexts that can begin $A$). In the case of a fixed string $\alpha \in \Sigma^*$ we will write $\KL(P_G~\|~Q)_\alpha$ where the second sum is replaced with a single term for $\alpha$ (as one can view $\alpha$ as a subgrammar of one string). Then we have, in our previous example, $\KL(P_G~\|~Q) = \KL(P_G~\|~Q)_\alpha + \KL(P_G~\|~Q)_A + \KL(P_G~\|~Q)_\beta $.

Our first fundamental result is that this decomposition, or recurrence, holds generally. Let the \emph{top-level} subgrammars denote the children of the root node in a CFG's subgrammar decomposition.

\begin{theorem}[KL loss as a recursive function over subgrammars] \label{theorem:KLdecomp}
Let $G$ be a PCFG with top-level subgrammars $A_1,\ldots,A_k$. Let $C \subset \Sigma^*$ be the set of (fixed) substrings of terminals in expansions of $S$. Then 
$\KL(P_G~\|~Q_\theta)$ 
\vspace{-1em}
\begin{equation*}
   = \sum_{i=1}^k \KL(P_G~\|~Q_\theta)_{A_i} ~+ \sum_{\alpha \in C}\KL(P_G~\|~Q_\theta)_\alpha 
\end{equation*}
\end{theorem}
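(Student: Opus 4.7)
The plan is to directly generalize the worked calculation for the simple case $S \to \alpha A \beta$ that immediately precedes the theorem. The key algebraic identity used there---that the autoregressive factorization of $Q_\theta$ combined with the subgrammar factorization of $P_G$ permit the two logarithms to be split at the same cut points of the parse tree---holds verbatim for arbitrary $S$-rules, and summing the resulting pieces across rules yields the stated decomposition.

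First, I would parametrize $S$-derivations in general. Each rule $r$ expanding $S$ has the form $S \to \alpha_0^{(r)} B_1^{(r)} \alpha_1^{(r)} \cdots B_{m_r}^{(r)} \alpha_{m_r}^{(r)}$, with fixed terminal substrings $\alpha_i^{(r)} \in C$ and each $B_j^{(r)}$ belonging to some top-level subgrammar $A_{i_j^{(r)}}$ (possibly $G$ itself, via a self-loop in the DAG of Theorem~\ref{theorem:DAGdecomp}). Sampling $x \sim P_G$ is equivalent to choosing a rule $r$ with weight $w_r$, then independently sampling $y_j^{(r)} \sim P_{A_{i_j^{(r)}}}$ for each $j$, and concatenating to form $x = \alpha_0^{(r)} y_1^{(r)} \alpha_1^{(r)} \cdots y_{m_r}^{(r)} \alpha_{m_r}^{(r)}$.

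Next, I would apply the autoregressive chain rule to both $P_G$ and $Q_\theta$ at the cut points delimiting each $\alpha_i^{(r)}$ and each $y_j^{(r)}$. For $P_G$ this is legitimate because, once a non-terminal is placed at a node of the parse tree, its expansion is conditionally independent of the surrounding context. Substituting the two factorizations into $\KL(P_G \| Q_\theta) = \E_{x \sim P_G}[\log P_G(x) - \log Q_\theta(x)]$, expanding the outer expectation as $\sum_r w_r \E_{\{y_j\}}[\cdot]$, and swapping the order of summation over pieces and over samples converts each piece-contribution into one of the weighted terms in the definition of $\KL(P_G \| Q_\theta)_\bullet$: the non-terminal contributions aggregate across rules into $\KL(P_G \| Q_\theta)_{A_i}$ for each top-level subgrammar $A_i$, and the fixed-string contributions aggregate into $\KL(P_G \| Q_\theta)_\alpha$ for each $\alpha \in C$.

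The main obstacle will be the bookkeeping of rule probabilities: $\log P_G(x)$ carries an extra $\log w_r$ term that $\log Q_\theta(x)$ does not. I would handle this by observing that the first piece of any derivation is the prefix $\alpha_0^{(r)}$, and that in the definition of $\KL(P_G \| Q_\theta)_{\alpha_0^{(r)}}$ the weight $P_G(\alpha_0^{(r)} \mid \epsilon)$ already aggregates the selection probabilities $w_{r'}$ of all rules that share this prefix; the $w_r$ factor is therefore correctly absorbed into that cross-entropy term without double counting. A secondary subtlety is grammatical ambiguity, where a single string may admit multiple parses: under the standard unambiguity assumption the accounting is immediate, and the argument extends to the ambiguous case by summing over parse trees at each position, since the subgrammar expansions remain conditionally independent given the tree.
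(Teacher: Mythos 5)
Your overall strategy matches the paper's: factor $P_G$ and $Q_\theta$ by the autoregressive chain rule at the cut points of the top-level parse, and regroup the resulting pieces into the restricted divergences (the paper proves Corollary~\ref{corollary:main} directly, having rewritten the fixed strings $\alpha\in C$ as trivial one-string subgrammars, but the bookkeeping is the same). The genuine gap is in the handling of the rule-selection weight $w_r$, which you correctly flag as ``the main obstacle'' but do not actually resolve. Under unambiguity, $\log P_G(x)$ contains a $\log w_r$ term that has no counterpart anywhere in the restricted pieces $\KL(P_G\|Q_\theta)_\bullet$: those contribute only $\log P_{A_i}(\cdot)$ terms or $\log 1 = 0$ in their log-numerators. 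Your claim that $P_G(\alpha_0^{(r)}\mid\epsilon)$ ``already aggregates'' the $w_{r'}$ of rules sharing the prefix conflates the \emph{weight} in front of a restricted KL with the \emph{argument inside its logarithm}; the weight handles the expectation over derivations, but the missing $\log w_r$ inside the log remains missing. Concretely, take $S\to cA\ [\tfrac12]\ \mid\ cB\ [\tfrac12]$ with $A\to a$, $B\to b$. The right-hand side of the theorem evaluates to $-\log Q(c\mid\epsilon)-\tfrac12\log Q(a\mid c)-\tfrac12\log Q(b\mid c)$, whereas $\KL(P_G\|Q_\theta)$ equals that quantity plus $\log\tfrac12$: the two differ by the rule-choice entropy $H(\{w_r\})=\log 2$. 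The identity holds exactly only when $S$ has a single expansion rule (as in the worked example $S\to\alpha A\beta$ preceding the theorem); otherwise a correction term $\sum_r w_r\log w_r$ is needed on the right.

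You should know that the paper's own proof has the same silent drop: in the displayed chain, passing from $\log P(a_{j,1}\cdots a_{j,l_j})$ to $\sum_i\log P_{A_{j,i}}(a_{j,i})$ discards the $\log p_j$ that should accompany the outer $p_j$. So your instinct to single out this term is sharper than the original argument, even though the absorption you propose does not close the gap. Your remark on ambiguity (sum over parse trees at each position) is correct and goes beyond the paper, which tacitly assumes each string arises from a unique top-level rule.
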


\begin{corollary} \label{corollary:main}
If we rewrite $G$ as an equivalent PCFG with additional non-terminals such that $S$ maps to strings only non-terminals (corresponding to subgrammars $A_1,\ldots,A_k)$; then the right sum of Theorem \ref{theorem:KLdecomp} can be removed:
\vspace{-1em}
\begin{equation*}
   \KL(P_G~\|~Q_\theta) = \sum_{i=1}^k \KL(P_G~\|~Q_\theta)_{A_i} 
\end{equation*}
\end{corollary}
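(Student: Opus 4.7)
The plan is to reduce the corollary to Theorem \ref{theorem:KLdecomp} by constructing an equivalent grammar $G'$ in which the set $C$ of inter–non-terminal terminal substrings at the top level is empty, so the second summation in the theorem vanishes trivially.

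First, I would construct $G'$ explicitly. For each production of $S$ in $G$, say $S \to \alpha_0 B_1 \alpha_1 B_2 \cdots B_m \alpha_m$ with each $B_j \in \mathcal{N}$ and each $\alpha_j \in \Sigma^*$, I introduce a fresh non-terminal $T_{\alpha_j}$ for every non-empty $\alpha_j$, equipped with the single deterministic production $T_{\alpha_j} \to \alpha_j$ of weight $1$, and I replace the original rule by $S \to T_{\alpha_0} B_1 T_{\alpha_1} B_2 \cdots B_m T_{\alpha_m}$ (omitting any $T_{\alpha_j}$ when $\alpha_j = \epsilon$), keeping the original weight. All other productions of $G$ are carried over unchanged. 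By construction, the rules for $S$ in $G'$ now expand only to strings of non-terminals.

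Next, I would verify that $G$ and $G'$ induce the same distribution on $\Sigma^*$. This is immediate from the deterministic rules $T_{\alpha} \to \alpha$: every derivation in $G'$ corresponds bijectively to a derivation in $G$, and the extra expansions contribute factors of $1$ to the overall probability. Hence $P_{G'} = P_G$ as distributions, and in particular $\KL(P_G\,\|\,Q_\theta) = \KL(P_{G'}\,\|\,Q_\theta)$ for any model $Q_\theta$. I would then identify the top-level subgrammars of $G'$: by Theorem \ref{theorem:DAGdecomp}, the children of the root in the DAG decomposition of $G'$ are exactly the original top-level subgrammars of $G$ together with the new one-string subgrammars generated by the $T_\alpha$'s. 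Relabeling this enlarged set as $A_1,\ldots,A_k$, the inter–non-terminal terminal substring set $C$ of $G'$ is empty.

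Finally, I would apply Theorem \ref{theorem:KLdecomp} to $G'$: the $\sum_{\alpha \in C}$ term disappears, yielding $\KL(P_{G'}\,\|\,Q_\theta) = \sum_{i=1}^k \KL(P_{G'}\,\|\,Q_\theta)_{A_i}$, which combined with the distributional equivalence gives the desired identity. The main subtlety — and really the only step requiring care — is checking that the $T_\alpha$ qualify as inner subgrammars in the sense of the earlier definition (they do, being closed under the rules involving their single non-terminal) and that the restricted divergence $\KL(P_{G'}\,\|\,Q_\theta)_{T_\alpha}$ agrees with the original $\KL(P_G\,\|\,Q_\theta)_\alpha$ on matching contexts; this amounts to bookkeeping with the conditional-KL definition, together with handling the $\alpha_j = \epsilon$ case where no $T_\alpha$ is introduced at all.
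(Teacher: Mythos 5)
Your reduction is logically valid as a derivation: the fresh-non-terminal construction does make $C$ empty, the induced distribution is unchanged, and the new $T_\alpha$'s are legitimate inner subgrammars whose restricted divergences agree with the original $\KL(\cdot)_\alpha$ terms. But the paper's logical dependency runs in the opposite direction. The paper's proof explicitly states that Theorem~\ref{theorem:KLdecomp} and Corollary~\ref{corollary:main} are equivalent (via precisely the rewriting observation you make) and then gives a \emph{direct} proof of Corollary~\ref{corollary:main}: it assumes $S$ expands only to non-terminals and computes the KL sum from scratch, expanding $P_G(s)$ over the top-level rules (weighted by their probabilities $p_j$), factoring each $P(a_{j,1}\cdots a_{j,l_j})$ into a product of subgrammar probabilities $P_{A_{j,1}}(a_{j,1})\cdots P_{A_{j,l_j}}(a_{j,l_j})$ and the autoregressive $Q_\theta$ into conditionals, then reorganizing the resulting double sum into the restricted-KL terms. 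Theorem~\ref{theorem:KLdecomp} is then recovered from Corollary~\ref{corollary:main} by the very rewriting you describe.

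This matters because your proof invokes Theorem~\ref{theorem:KLdecomp} as the anchor, while in the paper Theorem~\ref{theorem:KLdecomp} is \emph{itself} only proved via Corollary~\ref{corollary:main}. Plugged into the paper's structure, your argument would be circular: you would be deriving the Corollary from a Theorem whose only given proof is the Corollary. What is missing from your proposal is the actual ``base case'' — the direct computation showing that $\KL(P_G\,\|\,Q_\theta)$ equals $\sum_i \KL(P_G\,\|\,Q_\theta)_{A_i}$ when $S$ expands only to non-terminals. The reduction you wrote is real content (it is the easy half of the equivalence the paper asserts), but it does not by itself discharge the Corollary unless Theorem~\ref{theorem:KLdecomp} is given an independent proof.
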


The full proof of Theorem \ref{theorem:KLdecomp} and Corollary \ref{corollary:main} are in Appendix \ref{proof:maintheorem}. Upon closer inspection, it is clear that the recursion actually applies to any \emph{subgrammar}; that is, for subgrammar $A$ with subgrammars $B_1,\ldots,B_l$, $\KL(P_G~\|~Q_\theta)_{A} = \sum_j \KL(P_G~\|~Q_\theta)_{B_j}$ (indeed, we could have states Theorem \ref{theorem:KLdecomp} with respect to subgrammars, as $\KL(P_G~\|~Q_\theta) = \KL(P_G~\|~Q_\theta)_G$). Hence, this formula can be expanded recursively over each of the \emph{subgrammars} $A_i$ by repeated application, resulting in a sum over all the \emph{leaves} of the DAG decomposition of $G$ into its subgrammars; see Corollary \ref{corollary:leafdecomp} in the Appendix for the precise statement.

Now, suppose each top-level subgrammar $A_i$ occurs with probability $p_i$ over the top-level rules that expand $S$; it is tempting to conclude that the recursive formula simplifies to $KL(P_G~\|~Q_\theta) = \sum_{i=1}^k p_i \KL(P_G~\|~Q_\theta)$ (where the KL terms are no longer restrictions, but bona-fide divergences between the distribution $P_G$ and $Q_\theta$ as a language model for $A$). However, this is true only if $Q_\theta$ is excellent and models $P_A$ identically under any context where the subgrammar $A$ can occur, which may not be the case!

\begin{figure*}[h]
    \centering
    \begin{subfigure}[t]{0.45\linewidth}
        \centering
        \includegraphics[width=0.95\linewidth]{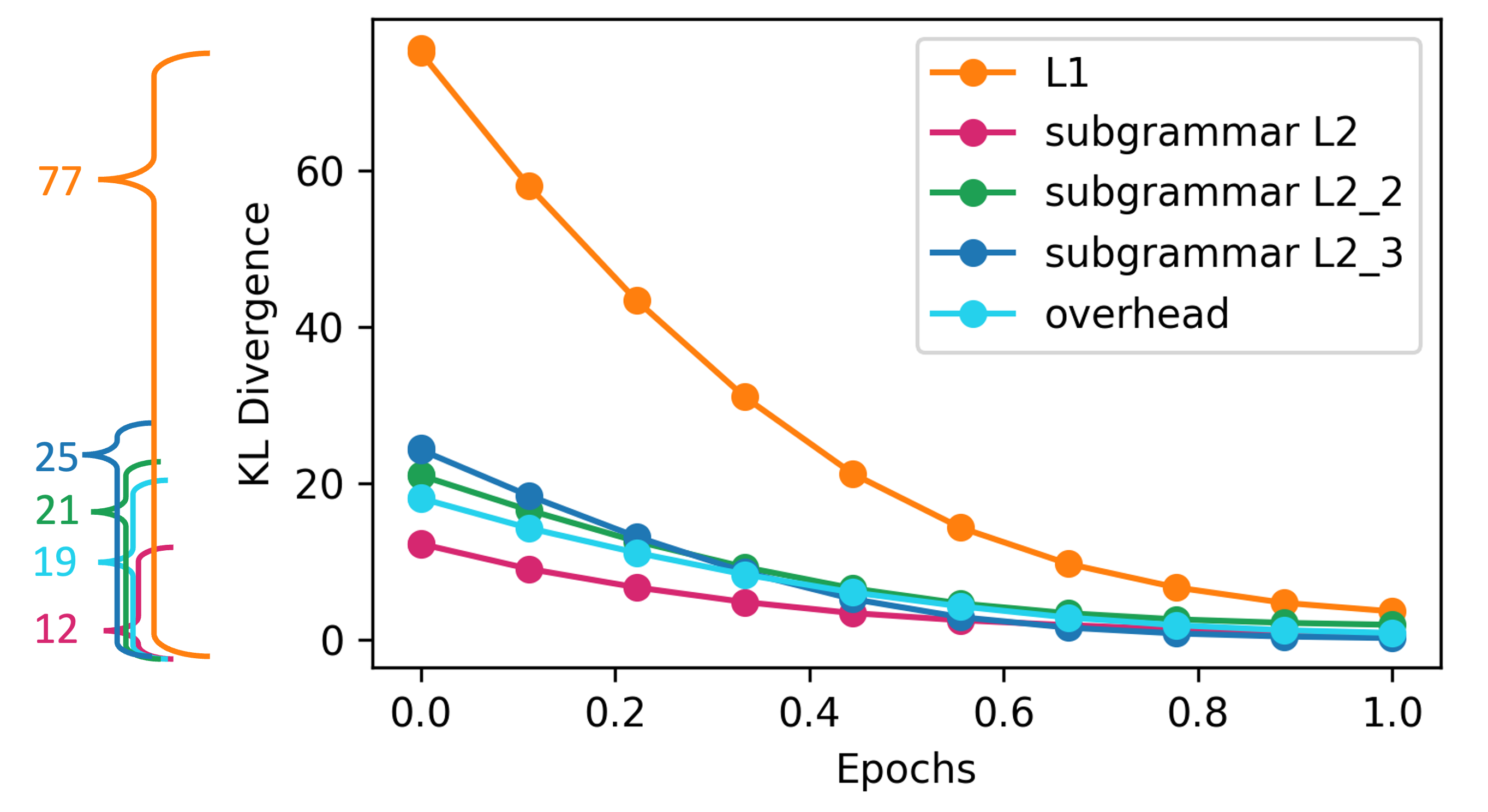}
        \caption{$S \rightarrow L2 ~L2\_2 ~L2\_3$, grammar with inner subgrammars each occuring with 100\% probability.}
        \label{fig:kl_div_a}
    \end{subfigure}
    \hfill
    \begin{subfigure}[t]{0.45\linewidth}
        \centering
        \includegraphics[width=0.95\linewidth]{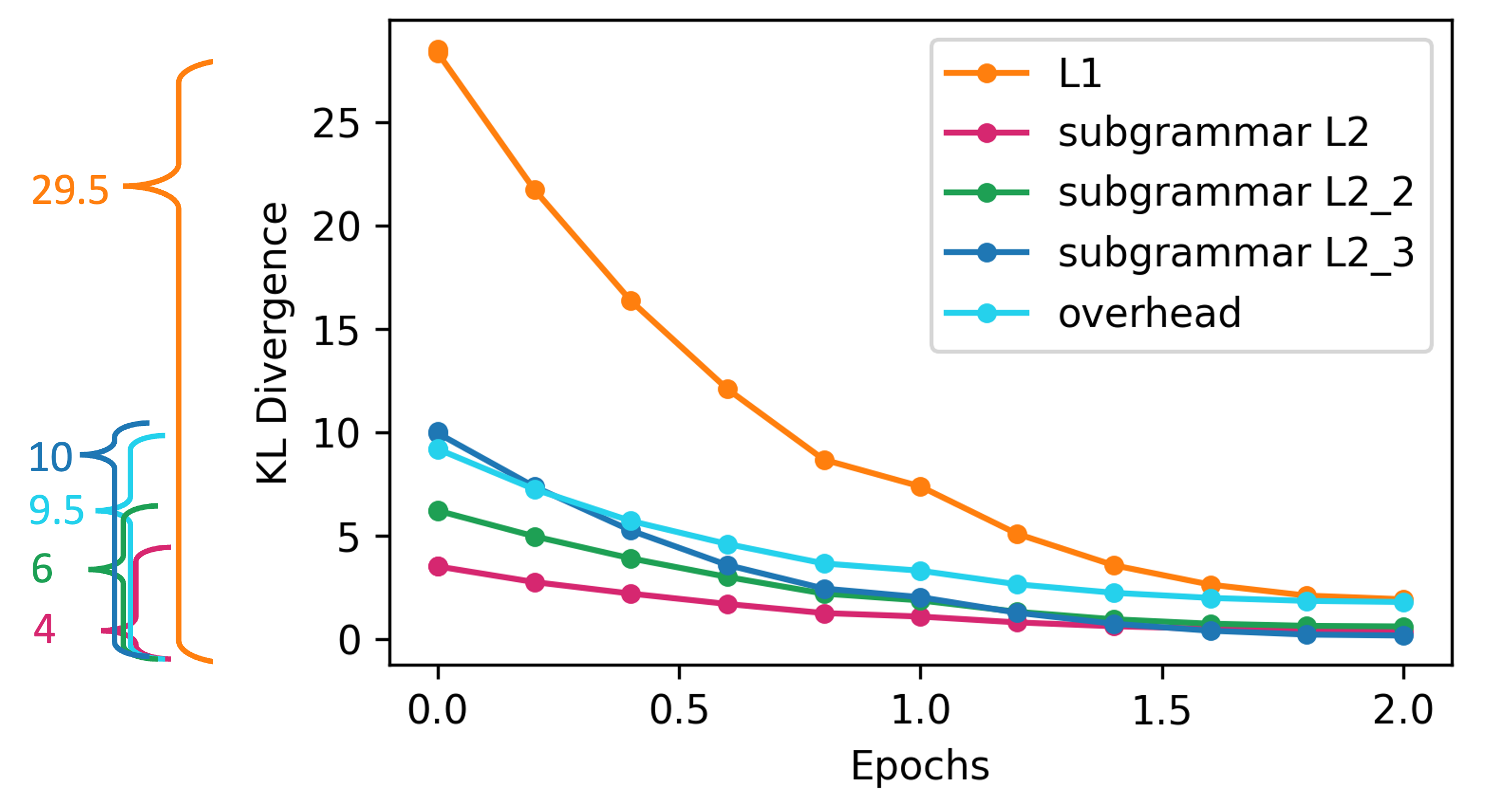}
        \caption{$S \rightarrow L2~(0.3),~S\rightarrow L2\_2~(0.3), S\rightarrow L2\_3~(0.4)$, a grammar with inner subgrammars each occurring with $< 100\%$ probability. Each D$_\text{KL}$ term is weighted by its prob. $p$, so to verify the claim of Theorem \ref{theorem:expectedrecurrence}, the reader need only check that the displayed terms sum correctly.}
        \label{fig:kl_decomp2}
    \end{subfigure}
    \caption{Experiments visualizing decomposition of KL-divergence (Theorems \ref{theorem:KLdecomp} and \ref{corollary:understandscomposition}) over subgrammars, and parallel learning of subgrammars ``overhead" refers to constant strings in between subgrammar roots. Full grammars defined in Appendix \ref{app:grammars}.}
    \label{fig:kl_decomp_combined}
\end{figure*}

\begin{corollary}\label{corollary:understandscomposition}
Let $G$ be a PCFG where $S$ evaluates to rules with only non-terminals (correspondingly, subgrammars) $A_1,\ldots,A_k$ each of which occurs with prob. $p_i$.

Assume $Q_\theta$ is ``\emph{context insensitive}" for each grammar $A_i$: that is, for two contexts $s,s'$ for which $P_G(A_i|s)P_G(A_i|s') > 0$, $Q_\theta(A_i|s) = Q_\theta(A_i|s')$ (the restrictions of $Q_\theta$ to strings from $A_i$ given possible contexts $s$ or $s'$, are the same).
Then
$$ \KL(P_G~\|~Q_\theta) = \sum_{i=1}^k p_i \KL(P_{A_i}~\|~Q_\theta(A_i)) $$ 
where $Q_\theta(A_i) = Q_\theta(A_i | s)$ for arbitrary context $s$ s.t. $P_G(A_i | s) > 0$.
\end{corollary}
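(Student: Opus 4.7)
The plan is to start from Corollary \ref{corollary:main}, which under the stated assumption that $S$ expands only into strings of non-terminals gives
$$\KL(P_G \,\|\, Q_\theta) \;=\; \sum_{i=1}^k \KL(P_G \,\|\, Q_\theta)_{A_i},$$
so it suffices to show that each restricted divergence factors as $p_i \, \KL(P_{A_i} \,\|\, Q_\theta(A_i))$.

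Next I would unpack the definition of the restricted divergence: $\KL(P_G \,\|\, Q_\theta)_{A_i}$ is a sum over prefixes $s \in \Sigma^{*}$ weighted by $P(s|\epsilon)\, P_G(A_i|s)$ of the conditional divergence $\KL(P_{A_i} \,\|\, Q_\theta(\cdot|s))$. The support of the weight consists precisely of those $s$ for which $P_G(A_i|s) > 0$, i.e.\ contexts in which an instance of $A_i$ can begin. On this support, the ``understands composition'' hypothesis says that $Q_\theta(\cdot|s)$ restricted to strings generated by $A_i$ agrees with a single fixed distribution $Q_\theta(A_i)$, so $\KL(P_{A_i} \,\|\, Q_\theta(\cdot|s)) = \KL(P_{A_i} \,\|\, Q_\theta(A_i))$ is constant in $s$ and pulls out of the sum, leaving
$$\KL(P_G \,\|\, Q_\theta)_{A_i} \;=\; \KL(P_{A_i} \,\|\, Q_\theta(A_i)) \cdot \Bigl(\sum_{s \in \Sigma^{*}} P(s|\epsilon) P_G(A_i|s)\Bigr).$$

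The final step is to identify the parenthesized marginal with $p_i$. Because every top-level rule of $G$ is a string of non-terminals $A_{j_1}\cdots A_{j_m}$, each sentence sampled from $P_G$ decomposes into consecutive subgrammar-instances whose starts partition the possible prefix positions. Writing out the generative process of $G$ one level deep and summing $P(s|\epsilon) P_G(A_i|s)$ over top-level rules and positions within those rules gives exactly the total mass assigned by the top-level expansion of $S$ to occurrences of $A_i$, which is $p_i$ as defined.

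The main obstacle will be this last bookkeeping step: the definition of $\KL(\cdot)_{A_i}$ marginalizes over \emph{all} prefixes $s$ rather than over only the first position at which $A_i$ can appear, so when a top-level rule contains several non-terminals the quantity being produced is really an expected count of $A_i$-occurrences at the top level, and one must verify that this matches the natural meaning of the $p_i$ used in the statement. Once the generative process is written out explicitly and one observes that top-level rules contribute no terminal factors (so no $\KL(\cdot)_\alpha$ residuals survive from Theorem \ref{theorem:KLdecomp}), this identification is immediate and the three steps compose to give the claimed closed form.
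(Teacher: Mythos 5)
Your proof is correct and follows the same line the paper implicitly takes: apply Corollary~\ref{corollary:main}, use the ``understands composition'' hypothesis to pull the now-constant divergence $\KL(P_{A_i}\,\|\,Q_\theta(A_i))$ out of the context sum, and identify the remaining marginal $\sum_{s} P(s|\epsilon)P_G(A_i|s)$ with $p_i$ (the paper states no separate proof of this corollary). Your caveat about the last step is well taken: the marginal is really the expected number of top-level occurrences of $A_i$, which only coincides with a ``probability'' when each top-level rule uses any given $A_i$ at most once; the paper's phrase ``occurs with prob.\ $p_i$'' should be read as this expected count, exactly as you resolve it.
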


Several comments are in order about this Corollary, which simplifies the general recursive formula of Theorem \ref{theorem:KLdecomp} so that the recursive terms are simple KL-divergences (not ``conditioned" KL-divergences). The corollary requires the strong assumption that the model be "context insensitive" for its subgrammars, but results in a particularly elegant decomposition. When one considers a model being \emph{trained} over time, it may or may not context insensitive for a given subgrammar at different steps; but at any point that it \emph{is}, this fundamental recurrence must hold. While we do not present it formally out of interest of space, it is not hard to extend to approximate / statistical versions of this corollary: to the extent that $Q_\theta$ is \emph{not} context-insensitive, the difference between the elegant decomposition and the true loss will differ to the same extent. Finally, our experiments suggest that this condition is perhaps not so strong, at least in the statistical sense: in the experiments in Figures \ref{fig:kl_div_a}, discussed below, qualitatively similar results were obtained when we computed subgrammar divergences with varying prefixes. In Section \ref{sec:deep_recursion}, we find that for prefixes of increasing length, our small transformer models the distribution of the ensuing subgrammar identically, but \emph{not} if the prefixes are highly \emph{deep}; however, such strings are ``rare" under the actual distribution (so one could indeed say that these models appear to be ``context-insensitive statistically").

In another direction, consider that in Theorem \ref{theorem:KLdecomp} and its corollaries, any of the top-level subgrammars $A_i$ could have been the grammar $G$ itself (if $G$ has a self-loop). It turns out we can say even more about the KL-divergence as a function of the \emph{degree} of ``self-loopiness", or recursion.

\begin{theorem}[KL-divergence with expected recurrence] \label{theorem:expectedrecurrence}
Let $G$ have \emph{proper} top-level subgrammars $A_1,\ldots,A_k$, each occurring with prob. $p_k$ over rules expanding $S$, and let $Q_\theta$ be context-insensitive for subgrammars of $G$.

Let the \emph{recursion} R be the number of times $S$ occurs in the rule that expands $S$. Then,
$$ \KL(P_G~\|~Q_\theta) = \frac{\sum_{i=1}^k \KL(P_{A_i}~\|~Q_\theta(A_i))}{1 - \mathbb{E}[R]}$$

If $1 - \mathbb{E}[R] < 0$, then the KL-divergence is \emph{unbounded} if $ \KL(P_{A_i}~\|~Q_\theta(A_i)) > 0$ for any $A_i$.
\end{theorem}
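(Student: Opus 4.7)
The plan is to set up a self-referential equation for $D := \KL(P_G \,\|\, Q_\theta)$ by exploiting the self-loop at $S$ in the subgrammar DAG, and then solve it. As a preprocessing step I would invoke the rewriting trick from Corollary \ref{corollary:main}, introducing auxiliary non-terminals so that every rule expanding $S$ rewrites it as a string of non-terminals only, each of which is either $S$ itself or one of the proper top-level subgrammars $A_1, \ldots, A_k$. This is without loss of generality because the transformation preserves the induced distribution, the proper subgrammars, and the random variable $R$.

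Next, I would apply Theorem \ref{theorem:KLdecomp} and the composition-understanding reasoning of Corollary \ref{corollary:understandscomposition} to the top-level rules of $S$. Fix a top-level rule $\rho$, chosen with probability $p_\rho$, containing $r_\rho$ copies of $S$ and multiplicities $m_{\rho,i}$ of each $A_i$ on its right-hand side. By composition understanding, the KL contribution of each occurrence of $A_i$ is exactly $\KL(P_{A_i} \,\|\, Q_\theta(A_i))$ independent of context, and the contribution of each occurrence of $S$ on the right-hand side is exactly $D$ itself, since $S$ \emph{is} the grammar $G$ viewed as its own subgrammar and $Q_\theta$ models it identically in any valid context. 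Summing these contributions, weighted by $p_\rho$, and collecting terms yields the recurrence
\begin{equation*}
D \;=\; \mathbb{E}[R]\cdot D \;+\; \sum_{i=1}^k p_i\, \KL\bigl(P_{A_i} \,\|\, Q_\theta(A_i)\bigr),
\end{equation*}
where $p_i$ denotes the expected number of occurrences of $A_i$ in the top-level rule (matching the notation in the theorem statement). Rearranging gives $(1-\mathbb{E}[R])\, D = \sum_i p_i\, \KL(P_{A_i}\,\|\,Q_\theta(A_i))$, yielding the stated formula whenever $\mathbb{E}[R] < 1$.

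For the unboundedness claim, rather than dividing by a negative quantity, I would instead unroll the recurrence directly. Let $C := \sum_i p_i\, \KL(P_{A_i}\,\|\, Q_\theta(A_i)) > 0$; then iterating gives the formal geometric series $D = C \sum_{n=0}^{\infty} \mathbb{E}[R]^n$. Intuitively, at each level of $S$-recursion the PCFG spawns $\mathbb{E}[R]$ further copies of $S$ in expectation, each contributing an additional $C$ to the total KL; when $\mathbb{E}[R] \geq 1$ these contributions fail to geometrically damp and the sum diverges. (Equivalently, for $\mathbb{E}[R] > 1$ the PCFG itself assigns positive mass to non-terminating derivations, so $P_G$ places less than unit mass on finite strings while $Q_\theta$ places full mass, forcing $D = \infty$.)

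The main obstacle will be carefully justifying the step where each inner-$S$ contribution equals $D$ exactly. This requires (i) verifying that composition understanding applies uniformly to $S$ as a subgrammar of itself, which presupposes that every context $s$ in which an inner $S$ may appear satisfies $P_G(S\mid s) > 0$, and (ii) making sure that when an $S$-expansion contains several occurrences of $S$ and/or $A_i$ the conditional KL contributions genuinely split additively -- this is precisely what Theorem \ref{theorem:KLdecomp} provides, but one has to check that the recursion does not double-count when the same subgrammar (namely $S$) appears both as the ``outer'' grammar and on the right-hand side of its own rules.
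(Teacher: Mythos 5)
Your proposal follows essentially the same approach as the paper's proof: invoke Corollary \ref{corollary:understandscomposition} while treating the recursive occurrences of $S$ as additional top-level subgrammars, obtain the self-referential linear equation $D = \mathbb{E}[R]\,D + C$, and solve algebraically for $D = \KL(P_G\,\|\,Q_\theta)$. You are in fact somewhat more careful than the paper in two places: you retain the $p_i$ weights in the numerator (the paper's theorem statement and proof drop them, inconsistently with Corollary \ref{corollary:understandscomposition}), and your geometric-series unrolling, together with the observation that $\mathbb{E}[R]>1$ makes $P_G$ sub-stochastic over finite strings, supplies a justification for the unboundedness claim that the paper only asserts.
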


See \ref{proof:expectedrecurrence} for the full proof. Theorem \ref{theorem:expectedrecurrence} can be seen as the equation for the ``base case" in the recursive formula for KL-divergence, since an irreducible (leaf) subgrammar evaluates only to strings of terminals and itself! This equation shows that the expected recursion in such a (sub)grammar must be less than 1 (and the closer it is to 1, the greater the ``blow-up" of its divergence to a language model); indeed, if the expected recursion is 1 or greater, the PCFG sampling process that recursively expands the root symbol will in expectation never terminate. Note that a similar, but more clunky, theorem can be shown without assuming context-insensitivity to subgrammars (replacing KL-divergences with conditioned / averaged versions).

Finally, Theorem \ref{theorem:outersubgrammars} proves a similar, recursive decomposition for \emph{outer} subgrammars; the statement and proof have been moved there for brevity.

\subsection{Experiments and Visualizations}
To visualize these theorems, we train small transformers on synthetic PCFGs with varied subgrammar structure, and plot the KL-divergence over training in Figure \ref{fig:kl_decomp_combined}. These example plots show visually how, throughout all stages of learning, the KL divergence (loss) is the sum over the corresponding loss for each subgrammar. 

\begin{figure*}[h!]
    \centering
    \begin{subfigure}[t]{0.45\linewidth}
        \centering
        \includegraphics[width=0.95\linewidth]{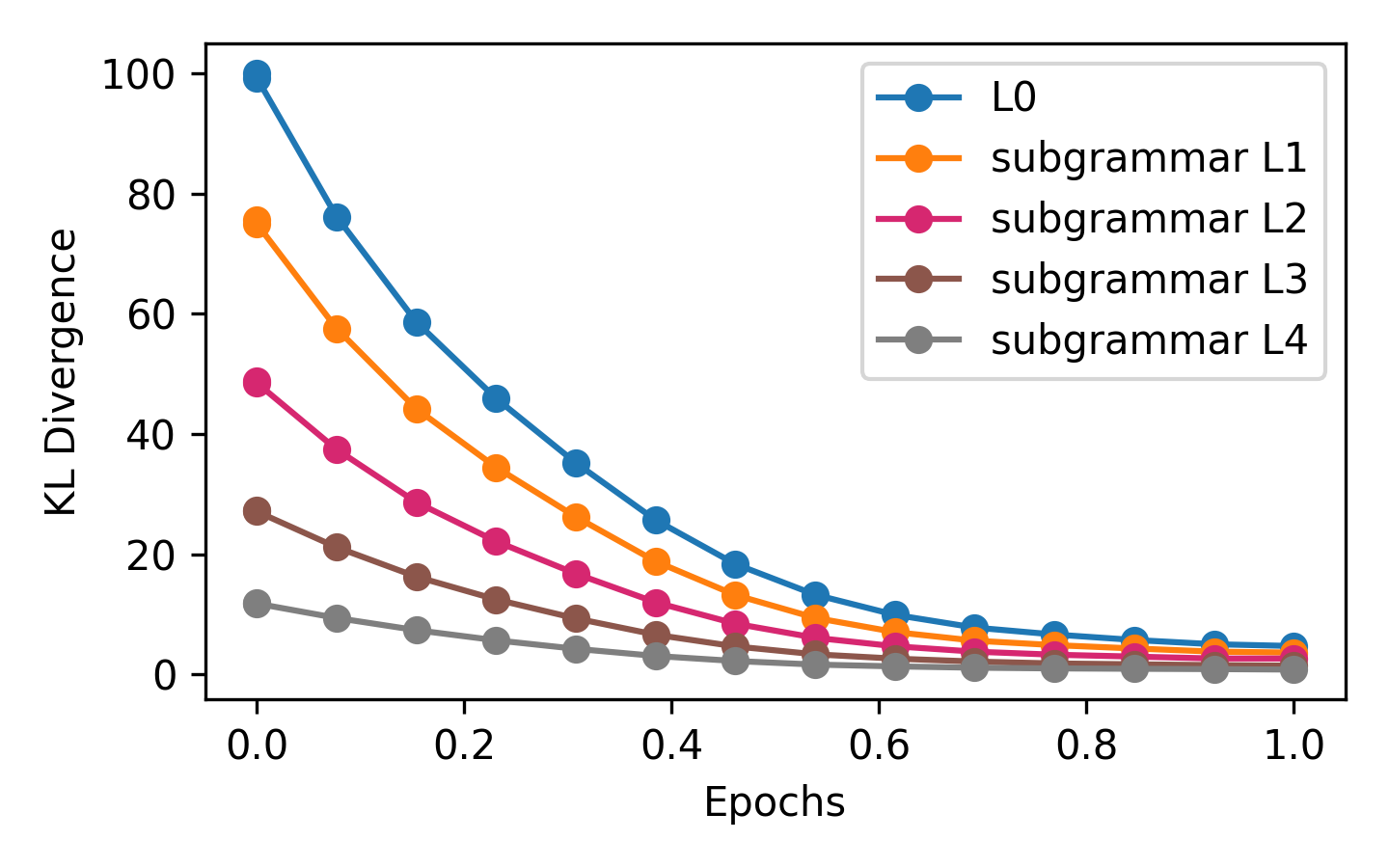}
        \vspace{-0.89em}
        \caption{\texttt{Deeper Recursion}: a language with an inner subgrammar DAG of depth 4.}
        \label{fig:deeper_recursion}
    \end{subfigure}
    \hfill
    \begin{subfigure}[t]{0.45\linewidth}
        \centering
        \includegraphics[width=0.95\linewidth]{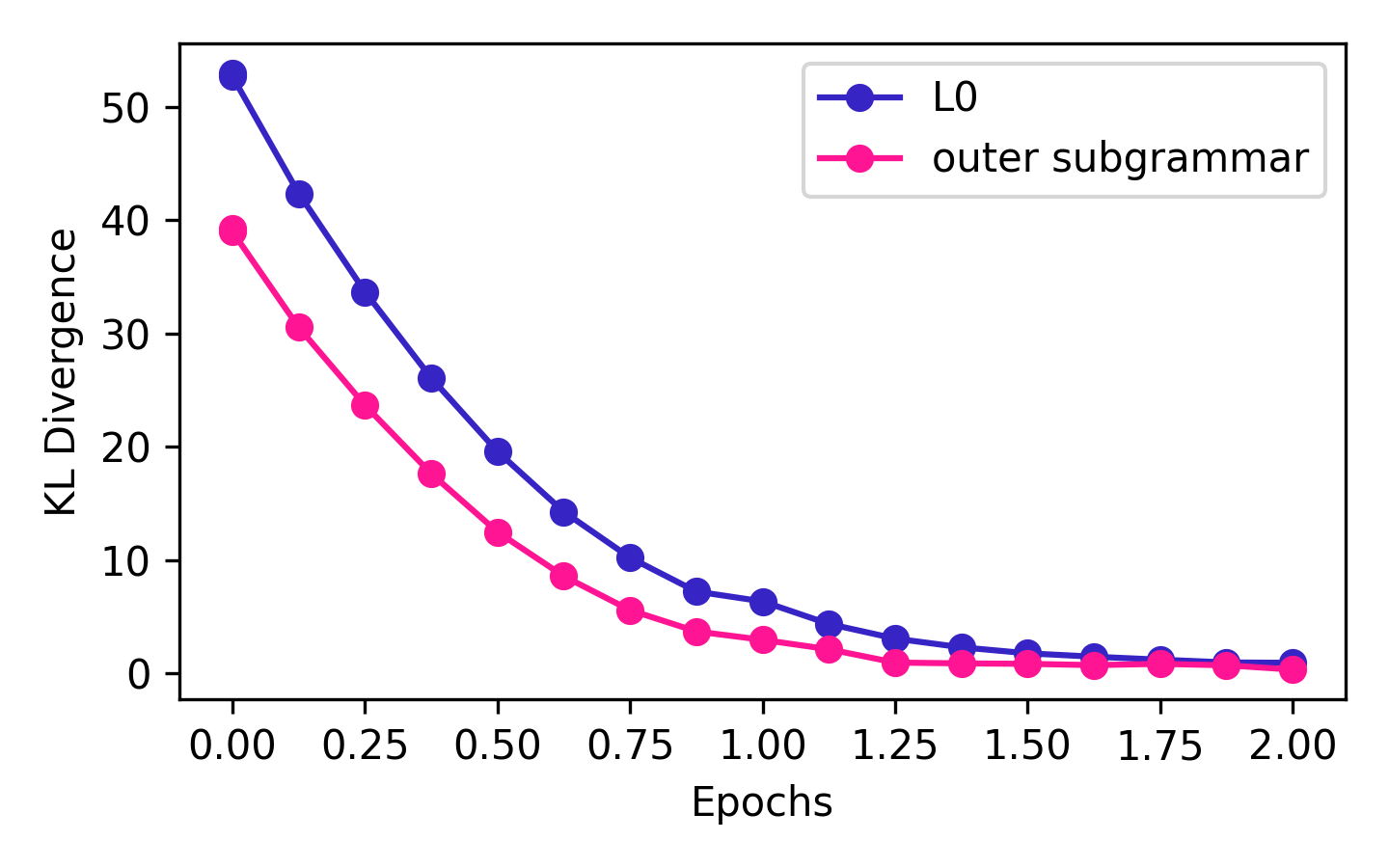}
        \vspace{-0.89em}
        \caption{KL decomposition for \texttt{Outer Subgrammar Example} using most of the rules (see Theorem \ref{theorem:outersubgrammars})}
        \label{fig:A-simple-B-simple}
    \end{subfigure}
    \caption{Additional examples of how loss, or KL-divergence, behaves with respect to varying subgrammar structure.}
    \label{fig:learning_behavior_subgrammar}
\end{figure*}

To illustrate Theorem \ref{theorem:expectedrecurrence}, consider a simple CFG with two rules:
$$
S \rightarrow x \;\; (p), \quad S \rightarrow (S~\text{and}~S) \;\;(1-p)
$$
The expected recursion is $\mathbb{E}[R] = 2(1-p)$. Assuming the language model is context-insensitive, then the KL-divergence is $C/(2p-1)$ for some constant $C$. We train a small transformer over this language with increasing $p \in (0.5, 1]$, demonstrating qualitatively the non-linear (inverse proportional) growth of KL-divergence as $p$ (the prob. of \emph{no} recursion) approaches $0.5$; see Figure \ref{fig:impact_prob_of_recursion} in Appendix \ref{subseq:recursive_decomp}. Finally, we use similar experiments to visualize loss decomposition for a grammar with many nested subgrammars (i.e. the DAG is a line), and for outer subgrammars, in Figure \ref{fig:learning_behavior_subgrammar}.

However, an additional phenomenon jumps out from all of these plots: \emph{the models learn all subgrammars in parallel}! One might have intuitively expected a model to first master a simpler subgrammar before progressing to the encompassing supergrammar. While the loss decomposition theorems show that nothing within the task of language learning \emph{prevents} parallel learning of subgrammars, it is possible to construct pathological, theoretical scenarios where a model independently optimizes subgrammars in sequence. Parallel subgrammar learning is property of the training method and model architecture. We believe that our work opens a fascinating new direction of studying \emph{when and why models learn all subgrammars in parallel}. Here, we present a simplistic but fundamental scenario in which this occurs:

\begin{corollary}\label{corollary:gradientdescent}
(Stated informally) Suppose $Q_\theta$ is trained on PCFG $G$ with subgrammars $A_1,\ldots,A_k$ via gradient descent, and that the model and PCFG together obey a kind of ``independence": for a gradient update to $\theta$ on a subgrammar $A_i$, that is $\delta = \nabla_\theta  (-\KL(P_G~||~Q_\theta)_{A_i})$, applying it does not hinder performance on other subgrammars. That is, for $\theta' = \theta+\delta$, $\KL(P_G~||~Q_{\theta'})_{A_j} \leq \KL(P_G~||~Q_{\theta})_{A_j}$ for $j\neq i$ (in fact it is sufficient for this condition to hold only for $\theta$ within the path of descent).

Then, if trained with gradient descent, $Q_\theta$ learns all subgrammars in parallel.
\end{corollary}

Note that this is indeed a Corollary; it would \emph{not} always be true if loss did not recurse linearly over subgrammars. one immediate future direction would be to study whether the small transformers and PCFGs of this paper learn subgrammars in parallel because they satisfy the independence condition of 4.7; we believe this to be the case. Despite their small size, they are likely still overparametrized with respect to the even tinier PCFGs. Future work can also aim to weaken the assumptions for parallel learning.

\section{Subgrammars and Improving Optimization}
\label{sec:pretraining}

While the previous section establishes a mathematical relationship between training loss and subgrammar structure, it is natural to consider whether the structure of CFGs could be exploited in training; e.g. is pretraining on a subgrammar helpful, and/ does it lead to a different representations?  Perhaps mastering simpler components first facilitates learning of more complex structures later. This question connects to \emph{curriculum learning} \citep{bengio2009, wang2021survey}, modular pretraining \citep{andreas2016neural, kaiser2017one}, and recent work on pretraining (``pre-pretraining'') on data with latent structure similar to language (e.g. music, code, formal structures) can transfer to language modeling \citep{papadimitriou2020learning, papadimitriou2023injecting, hu2025between}.

\begin{table*}[h]
    \centering
    \small
    \begin{tabular}{l||cc||cc||cc}
        \toprule
        & \multicolumn{4}{c||}{Two-layer Transformer} & \multicolumn{2}{c}{Four-layer Transformer} \\
        \cmidrule{2-7}
        & \multicolumn{2}{c||}{Pretraining 10 epochs} 
        & \multicolumn{2}{c||}{Pretraining 20 epochs} 
        & \multicolumn{2}{c}{Pretraining 10 epochs} \\
        \cmidrule{2-7}
         & Attention & MLP & Attention & MLP & Attention & MLP \\
        \midrule
        \multicolumn{1}{l}{\textbf{Full grammar sequences}} & & & & & & \\
        From Scratch     & 0.258 & 0.535 & 0.249 & 0.535 & 0.249 & 0.469 \\
        With Pretraining & 0.281 & 0.534 & 0.303 & 0.511 & 0.323 & 0.491 \\
        \emph{Percentage change (\%)} & \textcolor{teal}{\emph{+8.9}} & \emph{-0.2} & \emph{\textcolor{teal}{+21.7}} & \emph{-4.7} & \textcolor{teal}{\emph{+8.3}} & \emph{+1.0} \\
        \midrule
        \multicolumn{1}{l}{\textbf{Subgrammar sequences}} & & & & & & \\
        From Scratch     & 0.298 & 0.561 & 0.288 & 0.558 & 0.295 & 0.513 \\
        With Pretraining & 0.339 & 0.566 & 0.348 & 0.544 & 0.347 & 0.525 \\
        \emph{Percentage change (\%)} & \textcolor{teal}{\emph{+13.8}} & \emph{-0.1} & \textcolor{teal}{\emph{+20.8}} & \emph{-2.6} & \textcolor{teal}{\emph{+10.7}} & \emph{+1.9} \\
        Subgrammar pretraining only & 0.288 & 0.558 & 0.288 & 0.558 & 0.295 &  0.523\\
        \bottomrule
    \end{tabular}
    \caption{Average linear CKA similarity (0--1) across attention and MLP layers of a different, independently trained transformers when pretraining for 10 vs. 20 epochs. After pretraining, the models were trained for additional 45 epochs. The average was computed over off-diagonal seed pairs (30 seeds; 435 pairs) on the evaluation set. The most significant differences are in \textcolor{teal}{teal}. This experiment was run using \texttt{PythonPCFG}. }
    \label{tab:cka}
\end{table*}

\subsection{Robustness to Subgrammar Location}
One might expect the choice of subgrammar to influence learning, given the autoregressive nature of transformers. In particular, a \emph{prefix subgrammar}, an inner subgrammar always occurring at the beginning of sequences of G, might be easier to retain, whereas the results from pretraining on a \emph{suffix subgrammar} or an \emph{infix subgrammar} (appearing in the middle and disconnected from sentence endpoints) might be overwritten when training on the full grammar begins. However, our results show this is not the case: small transformers reliably retain modeling performance on \emph{any} subgrammar, regardless of its position. This robustness is illustrated in Figure~\ref{fig:subgrammar_choice}. As the experiments of the following section suggest, it appears that training on a subgrammar ferries the model into a distinct area of weight space in which the subgrammar is internally represented, and further optimization (on the whole language) remains in this subspace.

\begin{figure*}[h!]
    \centering
    \begin{subfigure}[t]{0.32\linewidth}
        \centering
        \includegraphics[width=\linewidth]{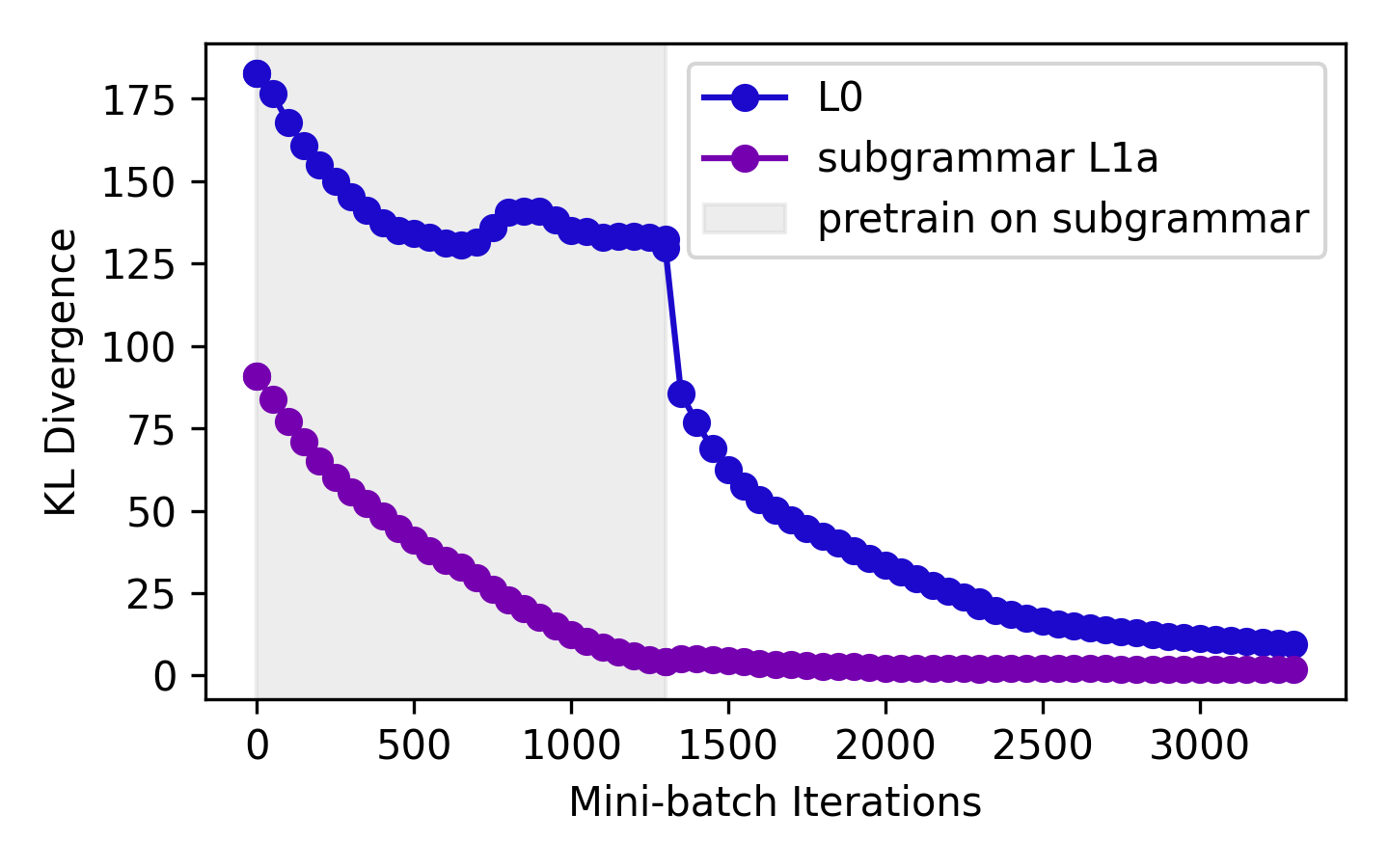}
        \caption{Pretraining on a prefix subgrammar}
        \label{fig:pretrain_suffix}
    \end{subfigure}
    \hfill
    \begin{subfigure}[t]{0.32\linewidth}
        \centering
        \includegraphics[width=\linewidth]{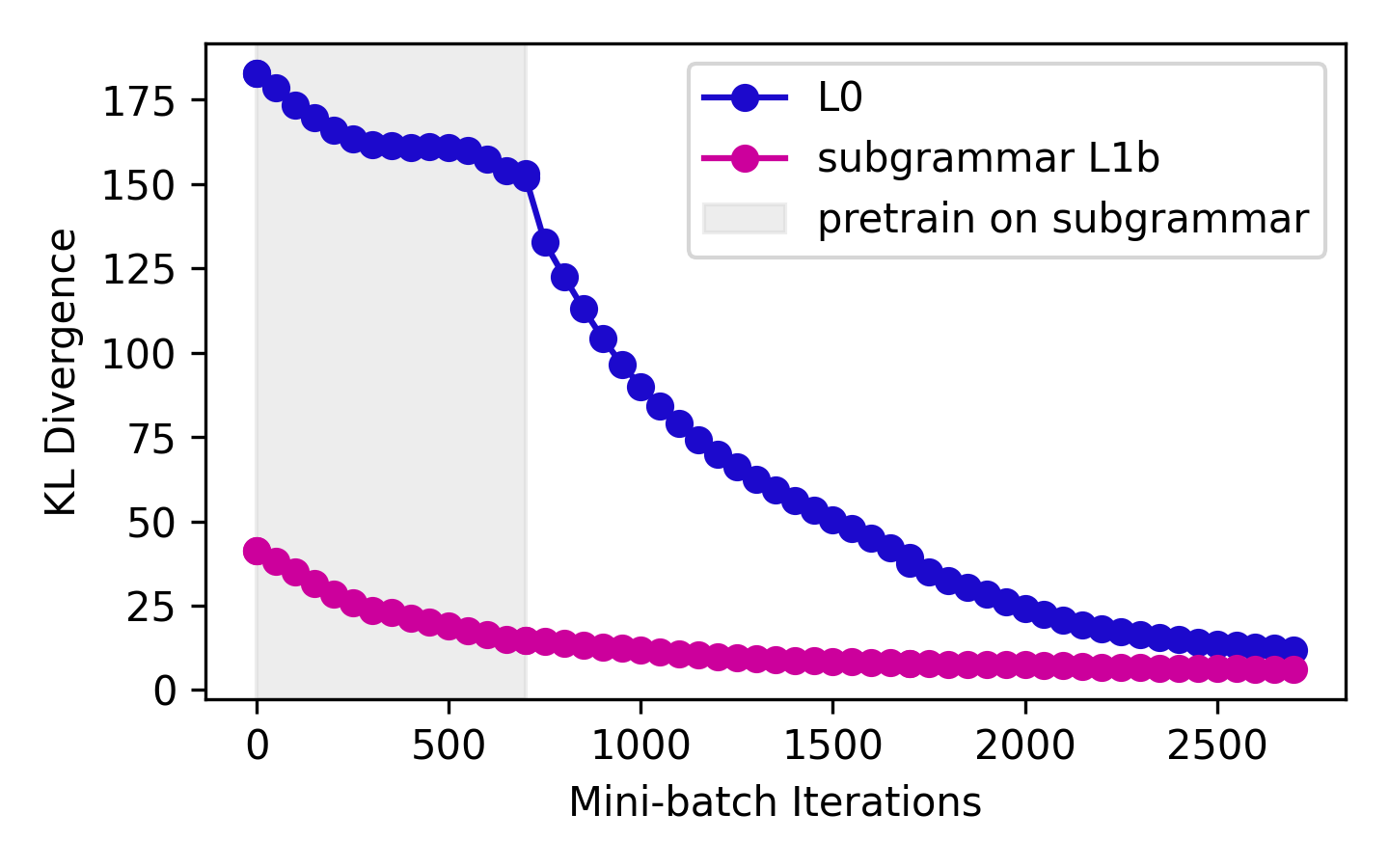}
        \caption{Pretraining on an infix subgrammar}
        \label{fig:pretrain_infix}
    \end{subfigure}
    \hfill
    \begin{subfigure}[t]{0.32\linewidth}
        \centering
        \includegraphics[width=\linewidth]{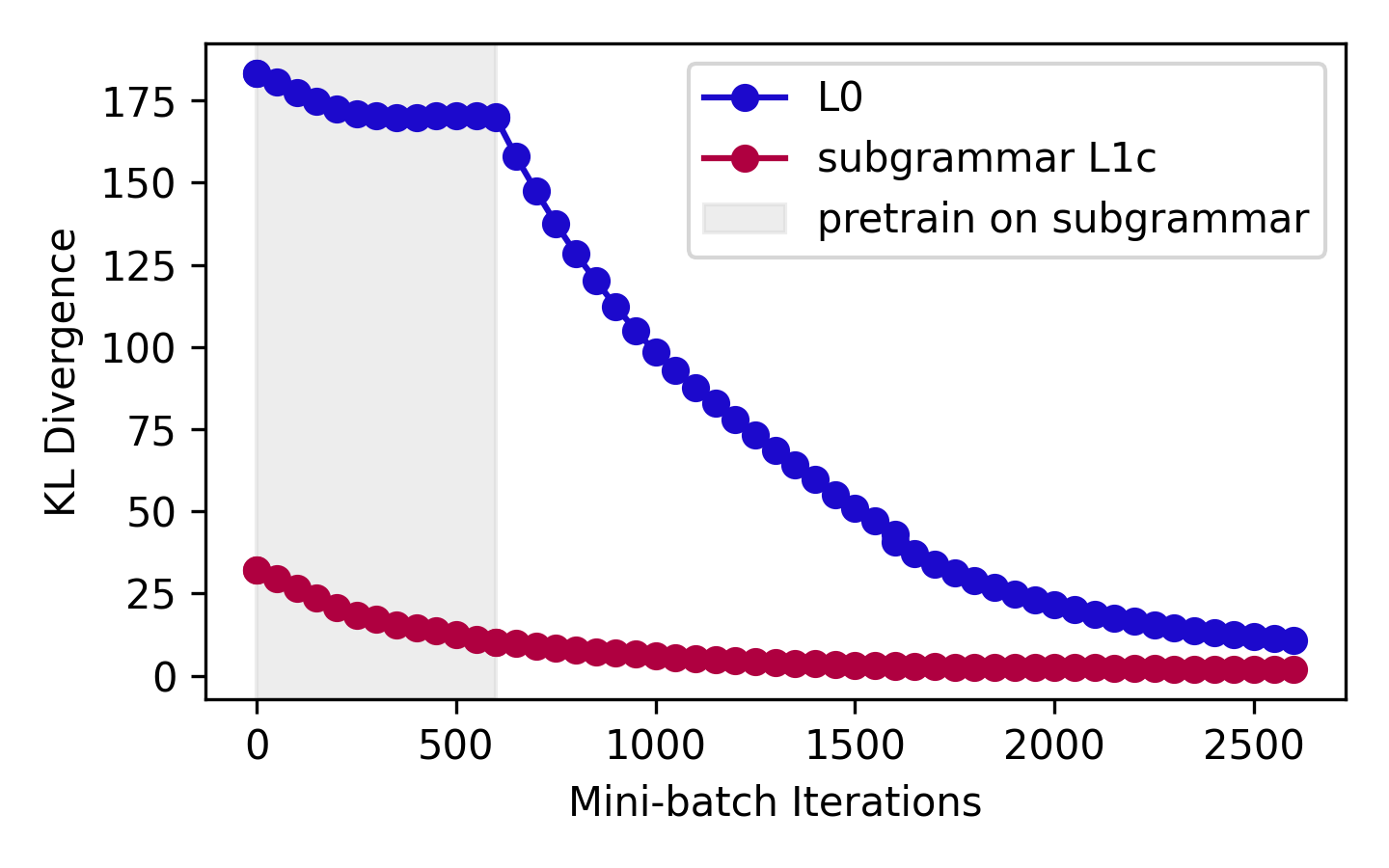}
        \caption{Pretraining on a suffix subgrammar}
        \label{fig:pretrain_suffix}
    \end{subfigure}
    \caption{Examples of pretraining on differently placed subgrammars using \texttt{ABC Grammar}.}
    \label{fig:subgrammar_choice}
\end{figure*}

\subsection{Activation-space analysis}\label{sec:activation-space}

We examine how subgrammar pretraining affects internal representations by comparing models trained from scratch to those pretrained on a subgrammar and then continued on the full grammar. In this experiment, all models are trained for the same total number of epochs and total number of sentences. Concretely, for each condition we train 30 random initializations for 2 settings: (1) the set of models is trained directly on the full grammar for $x$ epochs, (2) the set of models is pretrained for $y$, where $y < x$ epochs on the subgrammar followed by $x-y$ epochs on the full grammar. This setting was then ablated for different $x, y$.
Similarity is measured with Centered Kernel Alignment (CKA) \citep{kornblith2019similarity} across 30 random seeds. Independently trained models don't need to align neuron-by-neuron: the same computation can be represented after permutations or rotations of the hidden basis. CKA compares the geometry induced by activations across evaluation tokens, making it a natural measure of cross-seed representational similarity; Appendix~\ref{app:cka_details} provides details on the method and its application in our experiments.

Much to our surprise, we also found that for smaller models, subgrammar pretraining can even help achieve a \emph{lower final loss} (Figure \ref{fig:twolayer_direct_vs_continued} in Appendix \ref{subseq:pretraining_results}). This is striking because the pretrained models spend fewer epochs optimizing on the full grammar. Nevertheless, their internal representations of subgrammar strings remain more similar across seeds after full-grammar training, suggesting that these representations are not erased. More surprisingly, representations of full-grammar strings also become more aligned across seeds. This effect diminishes as the model size and representational complexity increase (for instance, this occurs for 2-layer transformers but not 4-layers). As expected, larger models consistently reach lower losses regardless of pretraining. 

CKA analysis reveals that \emph{pretrained models exhibit higher alignment across attention layers than models trained from scratch}, both when computed over (1) subgrammar sequences (surprising  because these representations are not “erased” in the second phase when optimizing for the full grammar) and (2) even more surprisingly, full-grammar sequences (on which the pre-trained models have been trained for fewer epochs). We observe for a large ``Python-like" PCFG (Table~\ref{tab:cka}), and a small PCFG with 3 top-level subgrammars (Table \ref{tab:cka2} in Appendix \ref{subseq:pretraining_results}). That is, losing a few epochs of training to train on subgrammar-only strings seemingly leads the model into a very different, “aligned” subspace of loss-minimized weight-space, although excessive pretraining can eventually reduce gains in final loss (see same Tables).

Why do pretrained models exhibit higher representational similarity across seeds? To probe this, we compare the representational similarity of the top quartile of seeds via cosine similarity of embeddings of three types of sequences: (i) sequences consisting solely of the subgrammar, (ii) sequences with no occurrence of the subgrammar, and (iii) sequences with both the subgrammar and other subsequences. We also compute (iv) the similarity between embedded pairs of a subgrammar sequence and a subgrammar-free sequence. For (i) and (ii), the attention-layers of pretrained models cluster subgrammar sequences (resp. no subgrammar sequences) significantly closer together than directly-trained models. This suggests that substructures learned during pretraining are retained after exposure to the full grammar. Finally, the gap between (iv) and (i), and between (iv) and (ii) is greater in pretrained models, suggesting pretrained models are better at internally segregating sequences with and without subgrammar subsequences (Table~\ref{tab:cos_sim} in Appendix \ref{subseq:pretraining_results}).
 
Our experiments are not exhaustive, and we leave open the question of \emph{how to train a model to consistently converge to the best optima}, given the rather strong prior of the subgrammar structure of the target CFG. Too little pretraining may not provide a strong enough inductive bias, while too much may over-specialize the model to the subgrammar and hinder transfer. This trade-off mirrors classical insights from curriculum learning, where an optimal “window” of pretraining exposure exists \citep{bengio2009, weinshall2018curriculum}.

\begin{figure*}[h!]
    \centering
    \begin{subfigure}[t]{0.45\linewidth}
        \centering
        \includegraphics[width=0.95\linewidth]{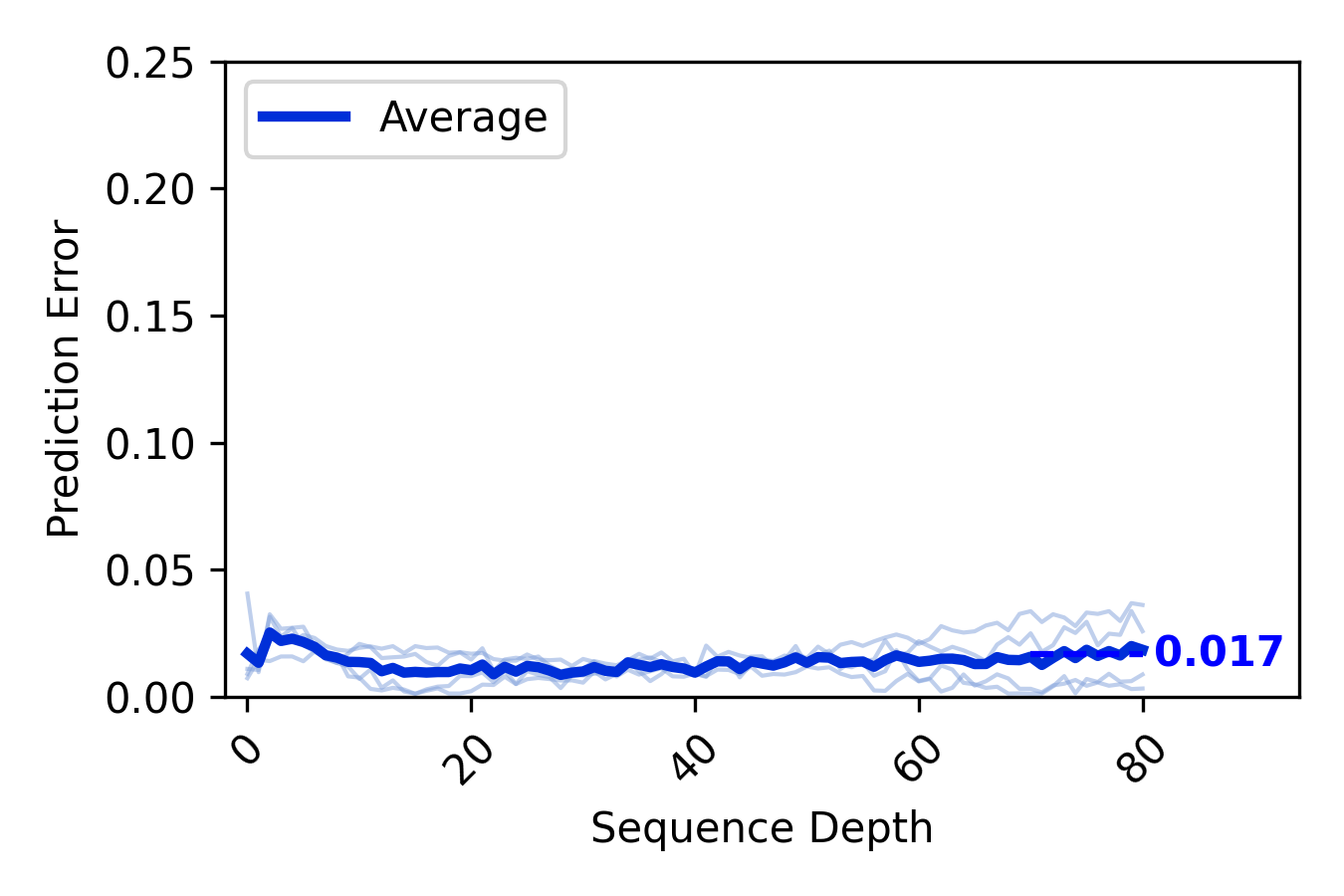}
        \vspace{-0.89em}
        \caption{Contexts of the form $(a)^i$ (depth 0)}
        \label{fig:recursions_case1}
    \end{subfigure}
    \hfill
    \begin{subfigure}[t]{0.45\linewidth}
        \centering
        \includegraphics[width=0.95\linewidth]{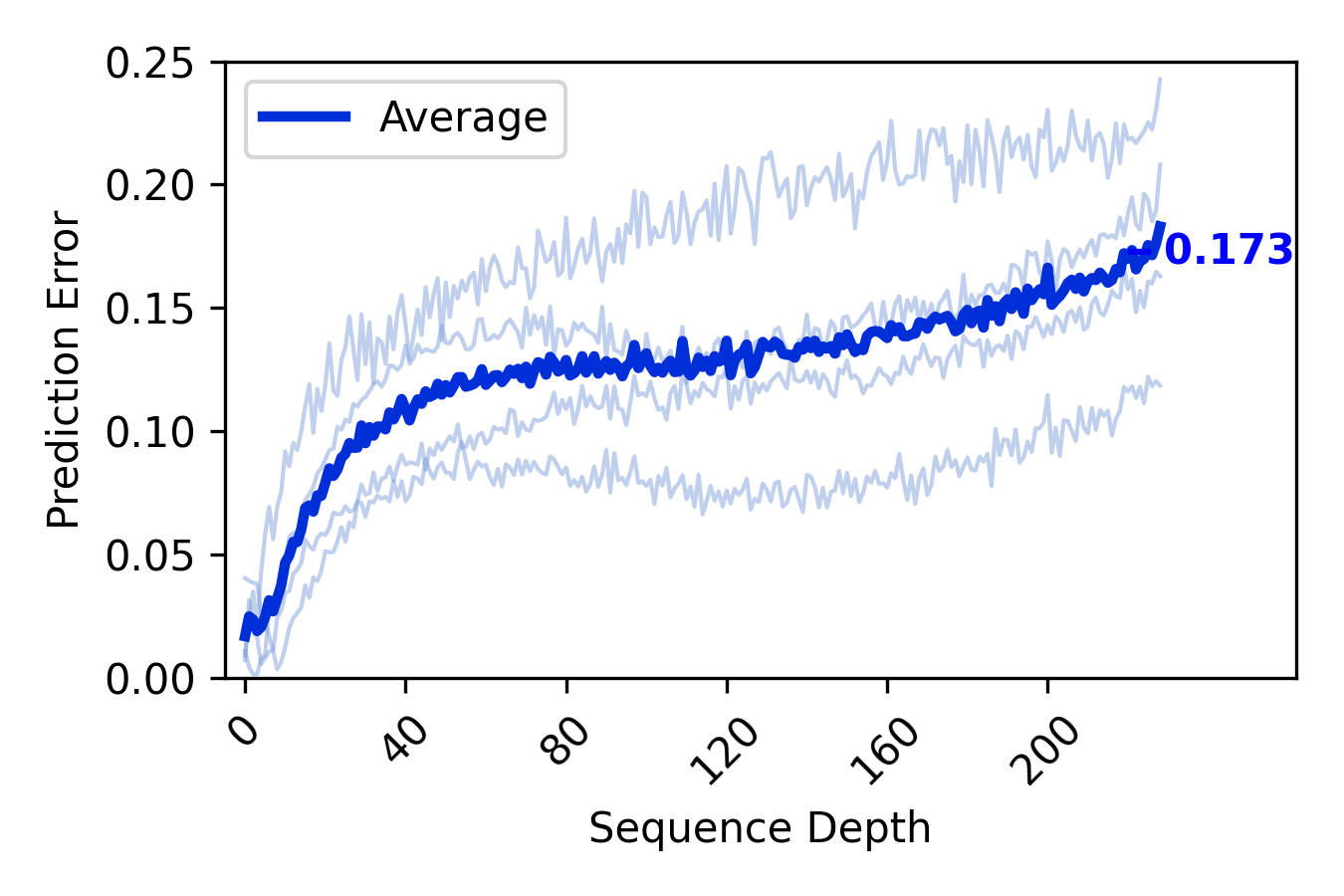}
        \vspace{-0.89em}
        \caption{Contexts of the form $ (^i$ (depth $i$)}
        \label{fig:recursions_case2}
    \end{subfigure}
    \caption{LM error vs. longer context, with or without recursion}
    \label{fig:recursions_logit_differences}
\end{figure*}

\section{Generalization: Do LMs ``Know Syntax"?}
\label{sec:deep_recursion}

This paper focuses on CFG \emph{substructure}: introducing subgrammars, establishing the fundamental relationship between subgrammars and language modeling, and initiating the study of how training on substructure affects training dynamics. Another highly related and natural direction is whether models that are loss-minimized on a PCFG truly know, or can generalize, the rules of the CFG.
On this front, existing literature in length generalization, theory of transformer expressivity, and LLM performance vs. depth has shown that language models struggle with depth (as opposed to just length). For instance, \citet{deletang2022neural} show that transformers (and other architectures) fail to fully generalize on non-regular tasks (in particular PCFGs), LSTMs in principle cannot capture CFGs \citep{merrill2019sequential}, and neither can transforms \citep{hahn2020theoretical, bhattamishra2020ability}, and several papers have shown that transformers fail to generalize to deep / highly-embedded scenarios outside the training distribution \citep{lakretz2022can}.
While the above literature is a portal into this interesting subfield, we briefly probe this question with our small transformers trained on an especially simple PCFG: \texttt{Nested Parentheses}.



The model achieves very low loss \emph{statistically}. We test generalization to probabilistically unlikely (but grammatically valid) sequences with increasing length in two ways: (i) extending the context at the same depth of recursion, feeding in $(a)^i$, and (ii)  growing sequences through repeatedly applying the recursive rule, resulting in contexts at increasingly deeper depths of recursion, of the form $)^i$. We then compare the model's output logits (its output distribution) against the ground-truth next-token distribution. The next-token distribution is identical for all test contexts, even between cases (i) and (ii).

Figure~\ref{fig:recursions_logit_differences} shows a striking contrast. For case (i), the prediction error remains low throughout, while for case (ii) it grows similarly to an inverse log curve. While the model appears to master the rules of the PCFG at shallow depth, this does not translate into robust handling of deeper recursive dependencies.

We also evaluate the effect of prepending different valid prefixes to the sequence of increasing depth, visualized in Figure \ref{fig:recursions_with_prefix_logit_differences} Appendix \ref{subseg:prefix_experiment}. The results remain largely unchanged -- even when using a faulty (non-grammatical) prefix. This suggests that the model’s primary difficulty lies in handling the depth of the subsequence it must complete, while it pays relatively little attention to the completed prefix.


Anecdotally, we find similar behavior even in state-of-the-art frontier models. We test GPT-5.1 Instant model on arithmetic expressions generated by a PCFG, presenting two kinds of long expressions: a chain composed of non-deep arithmetic operations, and a single deep arithmetic expression (depth 7)\footnote{We do not find the same discrepancies for GPT-5.1 Thinking, which solves all of our examples within 3-4 minutes for each expression. The Thinking model may pass arithmetic expressions to a calculator or program, and/or uses an externally prompted or engineered chain-of-thought process; in any case, this departs from language modeling in the strict sense.}. These tiny probes show that even LLMs, similar to our small LMs, struggle with depth and not length, correctly answering 5/5 non-deep arithmetic expressions but only 2/5 for a deep arithmetic expression. Note that for the not-deep arithmetic expressions (type 1), the LM in fact has to solve more terms than with the deeper recursion, but still solves them correctly.

\section{Discussion and Future Work}
Our work initiates the study of language-modeling \emph{dynamics} with respect to the {substructure} of Context-Free Grammars. On the theoretical side we show that the language-modeling loss decomposes into subgrammar-local contributions. Empirically, we find that small transformers tend to reduce error across many subgrammars in parallel; we offer in a corollary a condition under which gradient descent training has this property, but more generally, there are at least two directions for future work in this subarea: (1) measuring the degree to which concrete architectures satisfy or violate this assumption (including empirically studying whether they hold for architectures other than transformers, e.g. RNNs and LSTMS), and  (2) proving parallel learning under weaker conditions. 

Our work connects language modeling and substructure of CFGs; these ideas could possibly be extended to other formal classes, in particular ones that extend CFGs such as tree-adjoining grammars, indexed grammars, and most generally context-sensitive grammars. We have also not considered whether more interesting structural decompositions occur in the case of \emph{ambiguity}, i.e. multiple derivation paths of the same token sequences\footnote{However, Theorems 4.3-4.6 are hold even with ambiguity; if two subgrammars generate the same string, there is still an additive recurrence over the subgrammars. But if one can factor out a shared subgrammar of those subgrammars, one can factor out this subgrammar as a separate term in the top-level recursive decomposition}.

We also find subgrammar pretraining acts as an inductive bias: in tiny models it can improve performance, and even when it does not, it yields representations \emph{more aligned with the grammar's substructure}; understanding why, or what other benefits this may have is an open question. We have also not studied whether there is a correspondence between specific parts of the model (e.g. certain layers, attention heads) and subgrammar hierarchy.  

\section{Limitations}
On the theoretical side, our results thoroughly characterize how language-modeling loss decomposes with respect to subgrammar structure in a variety of cases. However, we do not have a full theory for \emph{when} or \emph{why} gradient-based training yields parallel improvement across subgrammars; towards this, we present a single, somewhat heavy-handed result (Corollary \ref{corollary:gradientdescent}. 
Empirically, our experiments use a small set of synthetic PCFGs and scaled-down, decoder-only transformers. Although these grammars are designed to isolate recursion, as well subgrammar sizes and depth, they do not cover the full diversity of CFGs (in particular ambiguity -- in all our experiments, CFGs are fully unambiguous; the effect of multiple possible parse-trees presents an interesting direction). 
Our results of Section \ref{sec:deep_recursion} leave unresolved whether failures at deep recursion reflect representational limits or optimization barriers: we conjecture that \emph{there exists a setting of the weights} of, say, a 2-layer, 2-head transformer (as in our experiments) that \emph{does} correctly model the PCFG (at least up to some very high bound on depth). This would show that the issue is gradient descent which is not able to find such ideal solutions, analogous to work showing that while neural networks can in principle represent functions like parity, modular counting, or compositional rules, gradient descent often fails to find these solutions without strong inductive bias or curricula \citep{telgarsky2016benefits,abbe2024learning}.
Finally, we do not compare learning dynamics across other language classes in the Chomsky hierarchy (e.g., regular or mildly context-sensitive languages) under matched conditions, and thus cannot yet disentangle ``difficulty of depth” in CFGs from other forms of dependency. 
Our work also does not explore the question of \emph{grammar induction}, the learning task of determining the CFG underlying the input data.

\section*{Impact Statement}
This paper presents work whose goal is to advance the field of machine learning, in particular the theory of language modeling and formal grammars. There are many potential societal consequences of our work, none of which we feel must be specifically highlighted here.

\bibliography{bibliography}
\bibliographystyle{icml2026}

\appendix

\twocolumn
\section{Details of our Transformer Architecture}
\label{app:transformer_architecture}

The transformer architectures used in our experiments are scaled-down variants of nanoGPT \citep{karpathy2023nanogpt}. Training proceeds with batches sampled uniformly at random from the dataset. The number of batches per epoch depends on the total size of the training data -- this implies that PCFG $G$ which generates longer sequences yield more iterations per epoch. Furthermore, the tokenizers contain only two special tokens: $BOS$ (beginning-of-sequence) and $EOS$ (end-of-sequence). We deliberately omit $UNK$ (unknown) and $PAD$ (padding) tokens, since all tokens are guaranteed to be in the grammar’s terminal set $\mathcal{N}$; this ensures the training distribution matches as closely as possible to the grammar distribution. 

\subsection{Model Parameter Settings}
\label{sec:model_params}

All models share the same decoder-only Transformer architecture as nanoGPT
\citep{karpathy2023nanogpt}. Each model consists of a learned token embedding
matrix $E \in \mathbb{R}^{|\mathcal{V}| \times d}$, learned positional
embeddings for a fixed context window of $256$ tokens, $L$ stacked decoder
blocks with multi-head self-attention and a two-layer feed-forward network with
hidden size $4d$, followed by a final LayerNorm and a tied output projection
$E^\top$. We use GELU activations, dropout rate $p = 0.1$ in the attention,
feed-forward, and embedding layers, and LayerNorm with learned scale and bias.
Input and output token embeddings are tied, and we exclude the positional
embeddings when reporting parameter counts.

We vary the number of layers $L \in \{1,2,4\}$, the model dimension
$d \in \{6,8,20,32\}$, the number of attention heads $h \in \{1,2,4\}$ (with
per-head dimension $d/h$), and the vocabulary size $|\mathcal{V}|$, which is
determined by the underlying grammar. Table~\ref{tab:model-configs} summarizes
the configurations used in our experiments. 

\begin{table}[t]
  \centering
  \begin{tabular}{lrrrrr}
    \toprule
    Config & $L$ & $h$ & $d$ & $|\mathcal{V}|$ \\
    \midrule
    FourLayer         & 4 & 4 & 8  & 100 \\
    TwoLayer          & 2 & 2 & 20 & 100 \\
    TwoLayer\_SMALL   & 2 & 2 & 6  & 100 \\
    TwoLayer\_smallVoc  & 2 & 2 & 20 & 5   \\
    OneLayer          & 1 & 1 & 8 & 100 \\
    OneLayer\_LARGE   & 1 & 1 & 32 & 100 \\
    \bottomrule
  \end{tabular}
  \caption{Transformer configurations used in our experiments.}
  \label{tab:model-configs}
\end{table}

\subsection{Training and Regularization}
\label{sec:training}

All models are trained with the AdamW optimizer
using a fixed learning rate of $6\times 10^{-4}$, $(\beta_1,\beta_2) =
(0.9, 0.95)$, and a batch size of $8$. We train for a fixed number of epochs.

A central aspect of our training setup is relatively strong weight decay. We
use AdamW with an $\ell_2$ penalty $\lambda = 0.1$ applied to all parameters
with at least two dimensions (i.e., the token embedding matrix, attention
projection matrices, and feed-forward weights), while excluding all bias terms
and LayerNorm scale parameters from weight decay. This decoupled weight decay
acts as our main form of explicit regularization in addition to dropout
($p = 0.1$ in the attention, feed-forward, and embedding layers) and the small
model sizes described in Section~\ref{sec:model_params}. Together, these
choices constrain effective capacity and discourage simple memorization of
grammar-generated strings.

To further stabilize optimization, we apply gradient norm clipping with a
maximum global norm of $1.0$ at every step. We do not use any learning-rate
scheduling or warm-up; the learning rate remains constant throughout training.
Checkpoints are saved periodically and at the beginning and end of training,
allowing us to analyze learning dynamics across epochs.

\section{Details on Centered Kernel Alignment (CKA)}
\label{app:cka_details}
In this section, we provide more detailed information on how we measure representational similarity using \emph{linear} CKA. 
For each model and each transformer block $\ell$, we register forward hooks on the attention and MLP modules and collect their output activations on the evaluation set. In our experiments, each evaluation sequence is processed separately; if the output of a hooked module has shape $1 \times L \times d_\ell$, we reshape it to $L \times d_\ell$ and concatenate these matrices across all evaluation sequences. Thus, for each model $m$ and submodule / layer $s$, we obtain an activation matrix $H_{m,s} \in \mathbb{R}^{N \times d_s}$, where $N$ is the total number of token positions in the evaluation corpus.

Given two models $m$ and $m'$, we first center activations column-wise,
\[
\tilde{H}_{m,s} = H_{m,s} - \frac{1}{N}\mathbf{1}\mathbf{1}^\top H_{m,s},
\]
and analogously for $\tilde{H}_{m',s}$. We then compute linear CKA as
\[
\mathrm{CKA}(H_{m,s},H_{m',s})
=
\frac{\left\|\tilde{H}_{m,s}^{\top}\tilde{H}_{m',s}\right\|_F^2}
{\left\|\tilde{H}_{m,s}^{\top}\tilde{H}_{m,s}\right\|_F
 \left\|\tilde{H}_{m',s}^{\top}\tilde{H}_{m',s}\right\|_F }.
\]

Within each training condition, we compute this quantity between corresponding submodules of all off-diagonal seed pairs. With 30 seeds, this leads to 435 distinct off-diagonal seed pairs. When we report CKA on full-grammar sequences and subgrammar-only sequences, the same procedure is applied separately to the corresponding evaluation subsets.

Linear CKA is well suited to our setting because independently trained models with the same functionality are not expected to match neuron-by-neuron: the same underlying computation may be implemented after a permutation or rotation of the hidden basis. Rather than comparing individual coordinates, CKA compares the geometry induced by activations across token positions. In particular, linear CKA is invariant to isotropic rescaling and orthogonal transformations of the representation space, so high CKA indicates that two models organize the evaluation tokens similarly even when their individual neurons are not directly comparable. We therefore use it as a layerwise summary of cross-seed representational alignment. However, as with any global similarity statistic, CKA does not by itself identify which specific features are shared.

\onecolumn
\section{Additional Proofs and Theorems}

\begin{proof}[Proof of Proposition \ref{proposition:lossdecomp}] \label{proof:lossdecomp}

\begin{align}
\mathcal{L}(\theta) &= \sum_{x \in \Sigma^*}P(x)(-\log Q_\theta(x)) \\
&= \sum_{x \in \Sigma^*}P(x)(\log P(x) - \log P(x) - \log Q_\theta(x)) \\
&=  \sum_{x \in \Sigma^*}P(x) \log\frac{P(x)}{Q_\theta(x)} -  \sum_{x \in \Sigma^*}P(x)\log P(x) \\
&= \KL(P~\|~Q_\theta) - H(P) 
\end{align}

\end{proof}

\begin{proof}[Proof of Theorem \ref{theorem:DAGdecomp}] \label{proof:DAGdecomp}
The decomposition can be constructed recursively. Given CFG $G=(\Sigma, \mathcal{N}, \mathcal{S}, \mathcal{P}, \mathcal{W})$, the root node of the DAG -- initially labeled with only $S$ -- represents the entire grammar. If $S$ can generate itself through successive applications of rules of $G$, we add a self-loop from $S$ to itself. 

Let $X \subseteq N$ be the subset of non-terminals on the right-hand side of any rule $S \rightarrow \alpha$. For each $A \in N$, let $G_A$ be the inner subgrammar generated by taking the closure of $A$ in $\mathcal{P}$ -- that is, all the expansions $A \rightarrow \alpha$, all expansions of those non-terminals on the right-hand side of those rules, and so on. In the case that the result subgrammar is all of $G$, we can add $A$ as an additional label to the root node. Otherwise, $G_A$ is a proper inner subgrammar, in which case we assign it a node as a child of $S$. Inductively, this procedure is applied to each new subgrammar node (which by construction has strictly fewer non-terminals than its supergrammar). 
\end{proof}

\begin{proof}[Proof of Theorem \ref{theorem:KLdecomp}] \label{proof:maintheorem}
Theorems \ref{theorem:KLdecomp} and Corollary \ref{corollary:main} are equivalent, for simplicity we directly prove Corollary \ref{corollary:main}. 

Let $A_1,\ldots,A_k$ be the top-level subgrammars of $G$ and suppose $G$ has $r$ rules expanding $S$. Then we can write all rules expanding $S$ as:
\begin{align*}
    S &\to A_{i(1, 1)} \cdots A_{i(1, l_1)} \\
      &\vdots \\
    S &\to A_{i(r, 1)} \cdots A_{i(r, l_r)}
\end{align*}
Suppose each rule in the PCFG occurs with probabilities $p_1,\ldots, p_r$ respectively. As we are directly proving Corollary \ref{corollary:main}, we assume $S$ expands only to non-terminals (by which we will also denote the top-level subgrammars; note that some of these may be $S$ itself if they are not proper subgrammars).

Denoting $P = P_G$ and $Q = Q_\theta$,
\begin{align}
\KL(P~\|~Q) &= \sum_{s \in \Sigma^*} P(s)\log\frac{P(s)}{Q(s)} \\ 
&= \sum_{j=1}^r  p_j \sum_{a_{j,1},\ldots,a_{j,l_j}}P(a_{j,1}\cdots a_{j,l_j}) \log\frac{P(a_{j,1}\cdots a_{j,l_j})}{Q(a_{j,1}\cdots a_{j,l_j})} \\ 
&= \sum_{j=1}^r p_j \sum_{a_{j,1},\ldots,a_{j,l_j}} P_{A_{i(j,1)}}(a_{j,1})\cdots P_{A_{i(j,l_j)}}(a_{j,l_j}) \sum_{i=1}^{l_j} \log\frac{P_{A_{i(j,i)}}(a_{j,i})}{Q(a_{j,i}|a_{j,1}\cdots a_{j,i-1})} \\ 
&= \Big[ \sum_{j=1}^r p_j \sum_{i=1}^{l_j}\sum_{a_{j,1},\ldots,a_{j,i-1}} P_{A_{i(j,1)}}(a_{j,1})\cdots P_{A_{i(j,i-1)}}(a_{j,i-1})  \Big] \sum_{a} P_{A_{i(j,i)}}(a)  \log\frac{P_{A_{i(j,i)}}(a)}{Q(a|a_{j,1}\cdots a_{j,i-1})} \\ 
&= \sum_{i=1}^k \Big[ \sum_s P_G(s|\epsilon)P_G(A|s)\Big] \sum_a P_{A_i}(a)\log\frac{P_{A_i}(a)}{Q(a|s)}
\end{align}
\end{proof}

\begin{corollary}\label{corollary:leafdecomp}
Suppose $G$ has subgrammars $Z_1,\ldots,Z_l$ as irreducible ``leaf" subgrammars in its DAG subgrammar decomposition, and all rules evaluate to strings of only non-terminals, or only-terminals. Then
$$ \KL(P_G~\|~Q_\theta) = \sum_{i=1}^l \KL(P_G~\|~Q_\theta)_{Z_i} $$
\end{corollary}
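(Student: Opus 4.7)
The plan is to prove the identity by strong induction on the height of $G$ in its DAG subgrammar decomposition from Theorem \ref{theorem:DAGdecomp}, where leaf nodes have height $0$ and non-leaf nodes have height one greater than the maximum height among their proper (non-self-loop) DAG-children. The core argument iterates Corollary \ref{corollary:main} together with its subgrammar-level version noted in the text immediately following the corollary—namely, that $\KL(P_G~\|~Q_\theta)_A = \sum_j \KL(P_G~\|~Q_\theta)_{B_j}$ whenever $A$ is a non-leaf subgrammar with proper top-level subgrammars $B_1,\ldots,B_m$—until every term in the resulting sum corresponds to a leaf of the DAG. The assumption that every rule expands to either all-terminal or all-non-terminal strings is exactly what lets us invoke Corollary \ref{corollary:main} at each recursive step, since each intermediate subgrammar inherits this normal form.

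For the base case, $G$ itself is a leaf, so $l=1$ and $Z_1=G$; the identity reduces to $\KL(P_G~\|~Q_\theta) = \KL(P_G~\|~Q_\theta)_G$, which follows by unpacking the restriction at the empty context $s=\epsilon$ (where $P_G(G\mid\epsilon)=1$). For the inductive step, let $G$ have height $n\geq 1$ with top-level subgrammars $A_1,\ldots,A_k$. Corollary \ref{corollary:main} gives
$$\KL(P_G~\|~Q_\theta) = \sum_{i=1}^k \KL(P_G~\|~Q_\theta)_{A_i}.$$
For each non-leaf $A_i$, the subgrammar-level analogue expands $\KL(P_G~\|~Q_\theta)_{A_i}$ as a sum of restrictions at $A_i$'s proper DAG-children, each of strictly smaller height by the acyclicity of the DAG modulo self-loops. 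Applying the inductive hypothesis rewrites each such term as a sum over the leaves reachable from $A_i$, and aggregating across $i$ yields the desired decomposition into $Z_1,\ldots,Z_l$.

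The main obstacle I anticipate is the careful handling of self-loops. A non-leaf subgrammar may have itself appear on the right-hand side of one of its own expansion rules, so a naive recursive step of the form $\KL(P_G~\|~Q_\theta)_A = \KL(P_G~\|~Q_\theta)_A + \cdots$ would be circular and break the induction on DAG-height. I would address this by showing that the restriction operator $(\cdot)_A$ already aggregates over \emph{all} contexts in which $A$ appears in $G$-derivations—including nested occurrences produced by self-loops—so that self-loop contributions at an internal node $A$ are absorbed, once and for all, into the leaf-level restrictions obtained further down the recursion, rather than requiring a separate unrolling step. A closely related bookkeeping subtlety is verifying that each leaf $Z_i$ enters the final sum with coefficient exactly one even when multiple DAG-paths from $G$ reach $Z_i$; this follows because $\KL(P_G~\|~Q_\theta)_{Z_i}$ is itself a context-aggregated quantity, so distinct paths to $Z_i$ contribute their context-probabilities into the same restricted term rather than generating duplicate summands.
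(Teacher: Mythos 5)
The paper gives no proof of Corollary~\ref{corollary:leafdecomp}; it merely remarks (after Theorem~\ref{theorem:KLdecomp}) that ``this formula can be expanded recursively over each of the subgrammars \ldots resulting in a sum over all the leaves.'' Your proposal fills this in with an explicit strong induction on DAG height, which is exactly the recursion the paper has in mind, and you correctly flag the two genuine subtleties (self-loops, and multiple DAG paths to the same leaf). In that sense the approach matches the paper's intent and goes further than what the paper actually writes down.

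However, your proposed resolution of the self-loop obstacle does not quite close the gap, and you should be aware of why. You declare that the restriction $(\cdot)_A$ ``already aggregates over \emph{all} contexts in which $A$ appears,'' and use this to argue that self-loop contributions are absorbed. But the recursive step you invoke --- $\KL(P_G\,\|\,Q_\theta)_A = \sum_j \KL(P_G\,\|\,Q_\theta)_{B_j}$ from Corollary~\ref{corollary:main} and the remark following it --- is derived as a \emph{one-step} decomposition: the sum over contexts $s$ in the paper's proof is indexed by the prefixes $a_{j,1}\cdots a_{j,i-1}$ produced by the \emph{first} expansion of the current non-terminal. If you instead read every restriction as the all-depths aggregate, the one-step identity fails for a self-loop node ($\KL_A$ would appear on both sides), and it also over-counts whenever two $B_j$'s share a descendant leaf, since each all-depths $\KL_{B_j}$ would already contain contributions from contexts reached via the other $B_{j'}$. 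In other words, you cannot simultaneously use the one-step identity as the induction engine and reinterpret its terms as all-depths aggregates: the identity as proved holds only for the one-step (depth-relative-to-parent) restriction.

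The honest way to handle a self-loop at node $A$ is an unrolling/limit argument, not a DAG-height induction: the one-step identity gives $\KL_A = \sum_{j:\,B_j\text{ proper}} \KL_{B_j}^{(1)} + \KL_A^{(\text{nested})}$, and iterating pushes the residual nested-$A$ term to ever-deeper contexts, whose total probability mass decays geometrically provided $\E[R] < 1$ --- this is exactly the mechanism behind Theorem~\ref{theorem:expectedrecurrence}. After the residual vanishes in the limit, the depth-indexed partial restrictions sum to the all-contexts restriction of each leaf, and the coefficient-one bookkeeping you mention then holds for the same reason (distinct depths, like distinct DAG paths, contribute disjoint context probability mass into the same aggregated $\KL_{Z_i}$). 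So your plan needs one additional ingredient --- a convergence argument for the self-loop unrolling --- rather than a redefinition of the restriction operator; with that added, the induction on DAG height (over the self-loop-free part of the DAG) plus the geometric-series step per self-loop node gives a correct proof.
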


\begin{proof}[Proof of Theorem \ref{theorem:expectedrecurrence}] \label{proof:expectedrecurrence}
Let $G$ be a PCFG with top-level, proper subgrammars $A_1,\ldots,A_k$. Summing over the top-level rules (expansions of $S$), suppose $S$ maps to a rule with $i$ recursive $S$'s with probability $p_i$ ($\sum_{i=0}^N p_i = 1$ for some $N < \infty$). Then, $\mathbb{E}[R] = \sum_{i=1}^N p_i \cdot i$. Then by Corollary \ref{corollary:understandscomposition} (treating both proper subgrammars and recursive $S$ as top-level subgrammars), we have
\begin{align}
\KL(P_G~\|~Q_\theta) &= \sum_{i=1}^k \KL(P_{A_i}~\|~Q_\theta(A_i)) + \sum_{i=1}^N p_i \cdot i \KL(P_G~\|~Q_\theta) \\  
&= \sum_{i=1}^k \KL(P_{A_i}~\|~Q_\theta(A_i)) + \mathbb{E}[R] \KL(P_G~\|~Q_\theta) \\
\implies~\KL(P_G~\|~Q_\theta) &= \frac{\sum_{i=1}^k \KL(P_{A_i}~\|~Q_\theta(A_i))}{1 - \mathbb{E}[R]}
\end{align}
\end{proof}

\begin{theorem}\label{theorem:outersubgrammars}
For $G$ with outer subgrammar $A$, let $\bar{A}$ be its complement. The KL-divergence splits as a weighted sum:
$$ \KL(P_G~\|~Q_\theta) = P_G(A) \KL(P_{A}~\|~Q_\theta|_A) +  P_G(\bar{A}) \KL(P_G|_{\bar{A}}~\|~Q_\theta|_{\bar{A}}) + \KL(P_G^*~\|~Q_\theta^*)$$
Where $D^*$, for $D \in \{P_G, Q_\theta\}$ is the 2 valued distribution of whether $D$ outputs a string in $A$ or $\bar{A}$, $P_A$ is the language from CFG $A$, and $D|_B$ indicates the marginal distrubution of $D$ over strings of $B \in \{A,\bar{A}\}$.

\end{theorem}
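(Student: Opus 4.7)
My plan is to recognize this as an instance of the classical chain rule for KL divergence applied to the partition $\Sigma^* = A \cup \bar A$ induced by the outer subgrammar $A$, and then to verify the identification $P_A = P_G|_A$ that is special to outer subgrammars and which bridges the generic chain-rule statement to the stated formula. Concretely, I would split the KL sum over $\Sigma^*$ into the contribution from $A$ and the contribution from $\bar A$, factor each density as a marginal mass times a conditional density, and collect the resulting log-ratios into one marginal piece plus two conditional-KL pieces.

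\textbf{Main steps.} First, write
$$\KL(P_G \,\|\, Q_\theta) = \sum_{s \in A} P_G(s) \log\frac{P_G(s)}{Q_\theta(s)} + \sum_{s \in \bar A} P_G(s) \log\frac{P_G(s)}{Q_\theta(s)}.$$
Second, for $s \in A$ factor $P_G(s) = P_G(A)\,P_G|_A(s)$ and $Q_\theta(s) = Q_\theta(A)\,Q_\theta|_A(s)$, so that the log-ratio splits as $\log(P_G(A)/Q_\theta(A)) + \log(P_G|_A(s)/Q_\theta|_A(s))$. Third, use $\sum_{s\in A} P_G(s) = P_G(A)$ together with $\sum_{s \in A} P_G|_A(s) = 1$ to conclude that the $A$-part of the sum equals
$$P_G(A)\log\frac{P_G(A)}{Q_\theta(A)} + P_G(A)\,\KL(P_G|_A \,\|\, Q_\theta|_A).$$
Fourth, apply the same factoring to $\bar A$, and combine the two marginal-mass contributions into exactly $\KL(P_G^* \,\|\, Q_\theta^*)$; then substitute $P_A$ for $P_G|_A$ to recover the statement.

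\textbf{Main obstacle.} The one non-routine step is the identification $P_A = P_G|_A$, which is what replaces the generic conditional distribution by the PCFG distribution of the outer subgrammar itself. For an unambiguous grammar this is clean: the outer subgrammar's renormalized weights $\mathcal{W}'$ on each non-terminal are precisely the conditional weights of $\mathcal{W}$ given that only $\mathcal{P}'$-rules are chosen, so top-down sampling from $G$ conditioned on using only rules in $\mathcal{P}'$ coincides exactly with sampling from $A$, and this conditioning event is equivalent to the sampled string lying in $L_A$; hence $P_A(s) = P_G(s \mid s \in L_A) = P_G|_A(s)$. For ambiguous grammars one must sum over derivation trees and handle the possibility that a string in $L_A$ has additional $G$-derivations using rules outside $\mathcal{P}'$; I would address this either by lifting the argument to the distribution over derivation trees (where the identity is exact) or by an explicit unambiguity assumption, leaving the surrounding algebra untouched.
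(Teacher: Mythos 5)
Your high-level route is exactly the paper's: split the KL sum $\sum_{s\in\Sigma^*}$ over the partition $A\cup\bar A$, factor each mass as a marginal times a conditional, and collect the pieces into $\KL(P_G^*\,\|\,Q_\theta^*)$ plus two weighted conditional KL terms. This is the chain rule for KL over a two-block partition. The paper compresses your steps 1--4 into a single display followed by ``rearranging terms''; your expansion of that algebra is faithful and correct.

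The delicate step you flag --- whether $P_A$ (the standalone, renormalized PCFG distribution of the outer subgrammar) equals the conditional $P_G|_A$ --- is genuinely the crux, and the paper's own proof silently elides it: it writes $P_A(s)$ as the prefactor but $P|_A(s)$ inside the $\log$, with no justification. You get credit for isolating this. However, your proposed resolution does not work. Conditioning $G$'s top-down sampler on ``never leaves $\mathcal{P}'$'' is \emph{not} per-non-terminal renormalization: the conditional probability of applying $X\to\alpha$ is proportional to $\mathcal{W}(X\to\alpha)$ times the probability that the subtree grown from $\alpha$ also stays inside $\mathcal{P}'$, and that survival factor generally differs across the retained rules of $X$. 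Unambiguity does not fix this, and neither does passing to derivation trees: $\prod_{r\in t}\mathcal{W}'(r)$ rescales $\prod_{r\in t}\mathcal{W}(r)$ by $\prod_X Z_X^{-n_X(t)}$ (with $Z_X$ the retained-mass of $X$ and $n_X(t)$ the number of $X$-expansions in $t$), which varies with $t$, whereas $P_G(\cdot\mid\text{stays in }\mathcal{P}')$ rescales by a single constant. Concretely, let $G$ have $S\to aS$ [$0.3$], $S\to a$ [$0.4$], $S\to b$ [$0.3$] and let the outer subgrammar $A$ drop $S\to b$, so $A$ has $S\to aS$ [$3/7$], $S\to a$ [$4/7$]. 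Then $P_A(aa)=12/49\approx0.245$, while $P_G(L_A)=\sum_{n\ge1}0.4\cdot0.3^{n-1}=4/7$ gives $P_G|_A(aa)=0.12\cdot 7/4=0.21$. So $P_A\neq P_G|_A$ even for an unambiguous, right-linear outer subgrammar. In short: same route as the paper, correct diagnosis of the non-routine step, but an incorrect attempt to close it. The statement is exactly the KL chain rule (and your steps 1--4 prove it) if $P_A$ is read as $P_G|_A$; if $P_A$ is read as the renormalized PCFG of $A$, an extra hypothesis is needed (e.g., that all retained rules of each non-terminal have equal $\mathcal{P}'$-survival probability), and the paper's proof shares this gap.
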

\begin{proof}
Writing $P$ for $P_G$ and $Q$ for $Q_\theta$ for legibility,
\begin{align}
 \KL(P~\|~Q) &= \sum_{s \in A} P(s)\log\frac{P(s)}{Q(s)} + \sum_{s \in \bar{A}} P(s)\log\frac{P(s)}{Q(s)}  \\ 
 &= P(A) \sum_{s\in A}P_A(s)[\log P(A) + \log P|_A(s) - \log Q^*(A) - \log Q|_A(s)] \\ & +  P(\bar{A}) \sum_{s\in \bar{A}}P|_{\bar{A}}(s)[\log P(\bar{A}) + \log P|_{\bar{A}}(s) - \log Q^*(\bar{A}) - \log Q|_{\bar{A}}(s)] \\ 
\end{align}
From which the final decomposition follows quite immediately by rearranging terms.
\end{proof}

\section{Definition of Grammars used for Experiments}
\label{app:grammars}

In this section we properly introduce the PCFGs used for running the experiments. 

\subsection*{KL decomposition example 1}
This grammar concatenates three subgrammars that generate separate spans independently. It is useful for studying settings in which the sequence distribution factorizes cleanly across segments. 

\[
\begin{aligned}
L1 \;\to\;& \texttt{sL2\_2}\;L2\_2\;\texttt{eL2\_2}\;
                     \texttt{sL2\_1}\;L2\_1\;\texttt{eL2\_1}\;
                     \texttt{sL2\_3}\;L2\_3\;\texttt{eL2\_3} \;[1.0] \\[4pt]
L2\_1 \;\to\;& NUM \;[0.4] \;\mid\;
                    L2\_1\;\texttt{*}\;L2\_1 \;[0.15] \;\mid\;
                    L2\_1\;\texttt{+}\;L2\_1 \;[0.15] \;\mid\;
                    NUM\;NUM \;[0.3] \\[4pt]
L2\_2 \;\to\;& \texttt{a}\;L2\_2\;\texttt{b} \;[0.6] \;\mid\; \texttt{c} \;[0.4] \\[4pt]
L2\_3 \;\to\;& \texttt{x}\;L2\_3 \;[0.8] \;\mid\; \texttt{x} \;[0.2] \\[4pt]
NUM \;\to\;& \texttt{0} \;[0.2] \;\mid\; \texttt{1} \;[0.2] \;\mid\; \texttt{2} \;[0.2] \;\mid\; \texttt{3} \;[0.2] \;\mid\; \texttt{4} \;[0.1] \;\mid\; \texttt{5} \;[0.1]
\end{aligned}
\]

\subsection*{KL decomposition example 2}

This grammar defines a mixture over the same three subgrammars as in \texttt{KL decomposition example 1}, with each sample drawn from exactly one component rather than from their concatenation. It is useful for testing whether a model can represent heterogeneous but mutually exclusive structures.

\[
\begin{aligned}
L1 \;\to\;& \texttt{sL2\_1}\;L2\_1\;\texttt{eL2\_1} \;\;[0.3] \;\mid\; 
                     \texttt{sL2\_2}\;L2\_2\;\texttt{eL2\_2} \;\;[0.3] \;\mid\;
                     \texttt{sL2\_3}\;L2\_3\;\texttt{eL2\_3} \;\;[0.4] \\[4pt]
L2\_1 \;\to\;& NUM \;\;[0.4] \;\mid\;
                       L2\_1\;\texttt{*}\;L2\_1 \;\;[0.15] \;\mid\;
                       L2\_1\;\texttt{+}\;L2\_1 \;\;[0.15] \;\mid\;
                       NUM\;NUM \;\;[0.3] \\[4pt]
L2\_2 \;\to\;& \texttt{a}\;L2\_2\;\texttt{b} \;\;[0.6] \;\mid\; \texttt{c} \;\;[0.4] \\[4pt]
L2\_3 \;\to\;& \texttt{x}\;L2\_3 \;\;[0.8] \;\mid\; \texttt{x} \;\;[0.2]  \\[4pt]
NUM \;\to\;& \texttt{0} \;\;[0.2] \;\mid\; \texttt{1} \;\;[0.2] \;\mid\; \texttt{2} \;\;[0.2] \;\mid\; \texttt{3} \;\;[0.2] \;\mid\; \texttt{4} \;\;[0.1] \;\mid\; \texttt{5} \;\;[0.1]
\end{aligned}
\]

\subsection*{Deeper Recursion}

This grammar builds hierarchical structure across multiple explicit levels, with recursive self-concatenation and marked opening and closing symbols at each depth. It is useful for testing deep recursion and long-range boundary matching rather than shallow local regularities.

\[
\begin{aligned}
L0 \;\to\;& \texttt{sL1}\;L1\;\texttt{eL1} \;\;[0.7] \;\mid\; 
                 L0\;L0 \;\;[0.3] \\[4pt]
L1 \;\to\;& \texttt{sL2}\;L2\;\texttt{eL2a} \;\;[0.6] \;\mid\;
                 L1\;L1 \;\;[0.3] \;\mid\;
                 V \;\;[0.1] \\[4pt]
L2 \;\to\;& \texttt{sL3}\;L3\;\texttt{eL3} \;\;[0.6] \;\mid\;
                 L2\;L2 \;\;[0.3] \;\mid\;
                 V \;\;[0.1] \\[4pt]
L3 \;\to\;& \texttt{sL4}\;L4\;\texttt{eL4} \;\;[0.6] \;\mid\;
                 L3\;L3 \;\;[0.3] \;\mid\;
                 V \;\;[0.1] \\[4pt]
L4 \;\to\;& \texttt{(}\;V\;\texttt{)} \;\;[0.7] \;\mid\;
                 V \;\;[0.3] \\[4pt]
V \;\to\;& \texttt{a} \;\;[0.04] \;\mid\; \texttt{b} \;\;[0.04] \;\mid\; \texttt{c} \;\;[0.04] \;\mid\; \texttt{d} \;\;[0.04] \;\mid\; \texttt{e} \;\;[0.04] \;\mid \; \texttt{f} \;\;[0.04] \;\mid\; \texttt{g} \;\;[0.04] \\
    &\texttt{h} \;\;[0.04] \;\mid\; \texttt{i} \;\;[0.04] \;\mid\; \texttt{j} \;\;[0.04] \;\mid \; \texttt{k} \;\;[0.04] \;\mid\; \texttt{l} \;\;[0.04] \;\mid\; \texttt{m} \;\;[0.04] \;\mid\; \texttt{n} \;\;[0.04] \\ 
    & \texttt{o} \;\;[0.04] \;\mid\; \texttt{p} \;\;[0.04] \;\mid\; \texttt{q} \;\;[0.04] \;\mid\; \texttt{r} \;\;[0.04] \;\mid\; \texttt{s} \;\;[0.04] \;\mid\; \texttt{t} \;\;[0.04] \;\mid \; \texttt{u} \;\;[0.04] \\
    & \texttt{v} \;\;[0.04] \;\mid\; \texttt{w} \;\;[0.04] \;\mid\; \texttt{x} \;\;[0.04] \;\mid\; \texttt{y} \;\;[0.04]
\end{aligned}
\]

\subsection*{Outer Subgrammar Example}

This grammar simplifies the idea of early language learning, such as from children. 
It places reusable lexical and syntactic components embedded inside a larger sentence frame (i.e. subject -- verb -- object). This makes it useful for testing how models learn shared structures across different outer contexts.  

\[
\begin{aligned}
START \;\to\;& \texttt{sSUBJ}\;SUBJ\;\texttt{eSUBJ}\;
                   \texttt{sVERB}\;VERB\;\texttt{eVERB}\;
                   \texttt{sOBJ}\;OBJ\;\texttt{eOBJ} \;\;[1.0] \\[4pt]
SUBJ \;\to\;& \mathbf{NOUN}\;\;[0.2] \;\mid\;
                 \texttt{a}\;NOUN\;\;[0.4] \;\mid\;
                 \texttt{the}\;NOUN\;\;[0.4] \\[4pt]
NOUN \;\to\;& \mathbf{N}\;\;[0.7] \;\mid\;
                  ADJ\;NOUN \;\;[0.3] \\[4pt]
\end{aligned}
\]
\[
\begin{aligned}
VERB \;\to\;& \mathbf{V}\;\;[0.3] \;\mid\;
                  V\;ADV \;\;[0.7] \\[4pt]
OBJ \;\to\;& \mathbf{blank}\;\;[0.5] \;\mid\;
                  \texttt{with}\;SUBJ \;\;[0.5] \\[4pt]
N \;\to\;& \texttt{dog} [0.2] \mid \texttt{cat} [0.2] \mid \texttt{fox} [0.1] \mid 
             \texttt{parrot} [0.1] \mid \texttt{hamster} [0.1] \mid \texttt{turtle} [0.1] \mid \\
             & \texttt{horse} [0.1] \mid \texttt{pig} [0.1] \\[4pt]
ADJ \;\to\;& \texttt{big} [0.2] \mid \texttt{poisonous} [0.2] \mid 
              \texttt{cute} [0.2] \mid \texttt{lazy} [0.2] \mid \texttt{quick} [0.2] \\[4pt]
V \;\to\;& \texttt{eats} [0.15] \mid \texttt{runs} [0.4] \mid \texttt{sleeps} [0.15] \mid 
             \texttt{talks} [0.15] \mid \texttt{cleans itself} [0.15] \\[4pt]
ADV \;\to\;& \texttt{quickly} [0.2] \mid \texttt{slowly} [0.3] \mid 
              \texttt{happily} [0.3] \mid \texttt{excitedly} [0.1] \mid \texttt{lazily} [0.1]
\end{aligned}
\]

The rules that are used for the unified subgrammar are highlighted in bold. 

\subsection*{ABC Grammar}

This grammar combines several different subgrammars within one sequence, including comparison expressions, balanced dependencies, and simple relative patterns. It is a useful stress test to show how a model can learn multiple compositional regimes simultaneously.

\[
\begin{aligned}
L0 \;\to\;& \texttt{sL1a}\;L1a\;\texttt{eL1a}\;\texttt{sL1b}\;L1b\;\texttt{eL1b}\;\texttt{sL1c}\;L1c\;\texttt{eL1c} \;\;[1.0] \\[4pt]
L1a \;\to\;& \texttt{sL2a}\;L2a\;\texttt{eL2a}\;L1a\;\texttt{sL2\_2a}\;L2\_2a\;\texttt{eL2\_2a} \;\;[0.4]
\;\mid\; \texttt{sL2a}\;L2a\;\texttt{eL2a}\;L1a \;\;[0.2]
\;\mid\; \\
& \texttt{action} \;\;[0.4] \\[4pt]
L1b \;\to\;& L1b\;\texttt{+}\;\texttt{sL2b}\;L2b\;\texttt{eL2b} \;\;[0.25] \;\mid\; 
             \texttt{sL2b}\;L2b\;\texttt{eL2b} \;\;[0.75] \\[6pt]
L1c \;\to\;& \texttt{xy}\;L1c \;\;[0.3] \;\mid\; \texttt{x}\;L1c \;\;[0.3] \;\mid\; 
             \texttt{sL2c}\;L2c\;\texttt{eL2c} \;\;[0.4] \\[4pt]        
L2a \;\to\;& \texttt{sL3}\;L3\;\texttt{eL3} \;\;[0.5] \;\mid\; 
             \texttt{not}\;L2a \;\;[0.25] 
          \;\mid\; L2a\;\texttt{and}\;L2a \;\;[0.1] \;\mid\;
             L2a\;\texttt{or}\;L2a \;\;[0.15] \\[4pt]
L2\_2a \;\to\;& \texttt{a}\;L2\_2a \;\;[0.8] \;\mid\; 
                \texttt{a} \;\;[0.2] \\[4pt]
L2b \;\to\;& \texttt{a}\;L2b\;\texttt{b} \;\;[0.6] \;\mid\; \texttt{c} \;\;[0.4] \\[4pt]
L2c \;\to\;& \texttt{c}\;L2\_2ac \;\;[0.7] \;\mid\; \texttt{c} \;\;[0.6]\\[4pt]
L3 \;\to\;& \texttt{==} \;\;[0.2] \;\mid\; \texttt{<=} \;\;[0.2] \;\mid\; 
             \texttt{<} \;\;[0.2] \;\mid\; \texttt{>=} \;\;[0.2] \;\mid\; \texttt{>} \;\;[0.2] \\[4pt]
\end{aligned}
\]

\subsection*{Nested Parentheses}

This is a minimal Dyck-style grammar that generates recursively nested parenthesis with simple leaf symbols. It serves as  a test for context-free recursion, stack-like memory, and long-rang matching dependencies. 

\[
\begin{aligned}
L0 \;\to\;& \texttt{(}\;L1\;\texttt{)} \;[0.7] \;\mid\; L0\;L0 \;[0.3] \\[4pt]
L1 \;\to\;& \texttt{(}\;L1\;\texttt{)} \;[0.8] \;\mid\; \texttt{a} \;[0.2]
\end{aligned}
\]

\subsection*{PythonPCFG}

This is a simplified grammar of Python-like code, combining expressions, assignments, imports, control flow, loops, and function definitions within a hierarchical statement structure. It is useful for testing whether models can learn programming-language style syntax, including nested blocks, indentation-like scope markers, and long-range dependencies between keywords, delimiters, and suites. Additionally, this grammar is more complex and contains a larger vocabulary than the other grammars which is useful to show how the experiments scale with complexity. 

\[
\begin{aligned}
STMTS \;\to\;& stmt \;[0.8] \;\mid\; stmt\;STMTS \;[0.2] \\[5pt]
stmt \;\to\;& small\_stmt\;\texttt{NL} \;[0.5] \;\mid\; compound\_stmt \;[0.5] \\[5pt]
small\_stmt \;\to\;& small\_stmt\;small\_stmt \;[0.35] \;\mid\; expr\_stmt \;[0.2] \;\mid\; return\_stmt \;[0.15] \;\mid\ \\
&import\_stmt \;[0.15] \;\mid\;
\texttt{PASS} \;[0.05] \;\mid\; \texttt{BREAK} \;[0.05] \;\mid\; \texttt{CONTINUE} \;[0.05] \\[5pt]
expr\_stmt \;\to\;& test \;[0.25] \;\mid\; \texttt{NAME}\;\texttt{EQ}\;test \;[0.5] \;\mid\; \texttt{NAME}\;augassign\;test \;[0.25] \\[5pt]
augassign \;\to\;& \texttt{+=} \;[0.34] \;\mid\; \texttt{-=} \;[0.33] \;\mid\; \texttt{*=} \;[0.33] \\[5pt]
return\_stmt \;\to\;& \texttt{RETURN} \;[0.3] \;\mid\; \texttt{RETURN}\;test \;[0.7] \\[5pt]
import\_stmt \;\to\;& \texttt{IMPORT}\;\texttt{NAME} \;[0.5] \;\mid\; \texttt{FROM}\;\texttt{NAME}\;\texttt{IMPORT}\;\texttt{NAME} \;[0.5] \\[5pt]
compound\_stmt \;\to\;& compound\_stmt\;compound\_stmt \;[0.2] \;\mid\; compound\_stmt\_2 \;[0.8] \\[5pt]
compound\_stmt\_2 \;\to\;& if\_stmt \;[0.35] \;\mid\; \texttt{FOR}\;\texttt{NAME}\;\texttt{IN}\;test\;\texttt{:}\;suite \;[0.25] \;\mid\; \\
&\texttt{WHILE}\;test\;\texttt{:}\;suite \;[0.2] \;\mid\; \texttt{DEF}\;\texttt{NAME}\;parameters\;\texttt{:}\;suite \;[0.2] \\[5pt]
suite \;\to\;& simple\_stmt \;[0.9] \;\mid\; \texttt{INDENT}\;compound\_stmt\_2\;\texttt{DEDENT} \;[0.1] \\[5pt]
if\_stmt \;\to\;& \texttt{IF}\;test\;\texttt{:}\;suite \;[0.35] \;\mid\; \texttt{IF}\;test\;\texttt{:}\;suite\;\texttt{ELSE}\;\texttt{:}\;suite \;[0.35] \;\mid\; \\
&\texttt{IF}\;test\;\texttt{:}\;suite\;\texttt{ELIF}\;test\;\texttt{:}\;suite \;[0.15] \;\mid\; \\
&\texttt{IF}\;test\;\texttt{:}\;suite\;\texttt{ELIF}\;test\;\texttt{:}\;suite\;\texttt{ELSE}\;\texttt{:}\;suite \;[0.15] \\[5pt]
parameters \;\to\;& \texttt{(}\;\texttt{)} \;[0.4] \;\mid\; \texttt{(}\;\texttt{PARAMS}\;\texttt{)} \;[0.6] \\[5pt]
PARAMS \;\to\;& \texttt{NAME} \;[0.6] \;\mid\; \texttt{NAME}\;\texttt{,}\;PARAMS \;[0.4] \\[5pt]
test \;\to\;& short\_expr \;[0.8] \;\mid\; short\_expr\;binop\;short\_expr \;[0.2] \\[5pt]
binop \;\to\;& \texttt{+} \;[0.34] \;\mid\; \texttt{-} \;[0.33] \;\mid\; \texttt{*} \;[0.18] \;\mid\; \texttt{/} \;[0.15] \\[5pt]
short\_expr \;\to\;& atom\_expr \;[0.85] \;\mid\; comparison\_short \;[0.15] \\[5pt]
comparison\_short \;\to\;& atom\_expr\;comp\_op\;atom\_expr \;[1.0] \\[5pt]
comp\_op \;\to\;& \texttt{EQEQ} \;[0.34] \;\mid\; \texttt{NE} \;[0.22] \;\mid\; \texttt{LT} \;[0.18] \;\mid\; \\
&\texttt{GT} \;[0.18] \;\mid\; \texttt{LE} \;[0.04] \;\mid\; \texttt{GE} \;[0.04] \\[5pt]
atom\_expr \;\to\;& atom \;[0.7] \;\mid\; atom\_expr\;trailer \;[0.3] \\[5pt]
trailer \;\to\;& \texttt{(}\;\texttt{)} \;[0.55] \;\mid\; \texttt{(}\;test\;\texttt{)} \;[0.1] \;\mid\; \texttt{DOT}\;\texttt{NAME} \;[0.25] \;\mid\; \texttt{(}\;test\;\texttt{)} \;[0.1] \\[5pt]
atom \;\to\;& \texttt{NAME} \;[0.48] \;\mid\; \texttt{NUMBER} \;[0.27] \;\mid\; \texttt{STRING} \;[0.2] \;\mid\; \\
&list\_lit \;[0.03] \;\mid\; dict\_lit \;[0.02] \\[5pt]
list\_lit \;\to\;& \texttt{(}\;\texttt{)} \;[0.88] \;\mid\; \texttt{(}\;test\;\texttt{)} \;[0.12] \\[5pt]
dict\_lit \;\to\;& \texttt{\{}\;\texttt{\}} \;[0.9] \;\mid\; \texttt{\{}\;test\;\texttt{:}\;test\;\texttt{\}} \;[0.1] \\[5pt]
NAME \;\to\;& \texttt{x} \;[0.15] \;\mid\; \texttt{y} \;[0.15] \;\mid\; \texttt{z} \;[0.1] \;\mid\; \texttt{n} \;[0.1] \;\mid\; \texttt{i} \;[0.1] \;\mid\; \\
&\texttt{j} \;[0.1] \;\mid\; \texttt{f} \;[0.1] \;\mid\; \texttt{g} \;[0.1] \;\mid\; \texttt{val} \;[0.1] \\[5pt]
NUMBER \;\to\;& \texttt{0} \;[0.15] \;\mid\; \texttt{1} \;[0.15] \;\mid\; \texttt{2} \;[0.15] \;\mid\; \texttt{3} \;[0.1] \;\mid\; \texttt{4} \;[0.1] \;\mid\; \\
&\texttt{5} \;[0.1] \;\mid\; \texttt{10} \;[0.1] \;\mid\; \texttt{42} \;[0.15] \\[5pt]
STRING \;\to\;& \texttt{STR\_A} \;[0.4] \;\mid\; \texttt{STR\_B} \;[0.3] \;\mid\; \texttt{STR\_HELLO} \;[0.3]
\end{aligned}
\]

\section{Additional Experimental results}
\label{app:add_experiment_results}

\subsection{ChatGPT-5 Instant Arithmetic Stress Test}
We generate arithmetic expressions using integers uniformly sampled from 0–9 and the operators \{+, -, *, / \} are generated. Expression depth is defined as the maximum level of nested parentheses. \emph{Non-deep chains} consist of 50 expressions of depth at most 2, concatenated by addition. \emph{Deep chains} consist of single expressions with recursive nesting up to depth 7. Below are an example each:

\textbf{Non-deep arithmetic expression}: \\
$
((4 * 4) * (1 - 9)) + ((6 / 3) * (5 / 1)) + ((2 - 8) * (8 / 5)) + ((5 / 9) * (7 * 7)) + ((7 - 4) + (8 / 7)) + ((9 - 6) + (1 - 0)) + ((0 / 1) + (9 - 9)) + ((4 / 1) + (0 + 5)) + ((6 + 6) / (2 / 5)) + ((4 / 5) + (0 - 2)) + ((3 * 1) + (5 + 3)) + ((1 - 0) - (7 - 6)) + ((2 * 5) * (5 / 3)) + ((6 + 9) - (6 / 1)) + ((1 + 4) / (6 + 9)) + ((9 / 7) - (6 + 2)) + ((6 - 7) / 9) + ((4 + 1) + (7 - 3)) + ((5 - 3) - (1 * 3)) + ((5 + 6) + 4) + ((5 * 2) + (0 - 0)) + ((6 * 7) * 8) + ((5 / 2) + (4 + 6)) + ((5 / 5) * (9 / 6)) + ((4 - 3) * (8 * 7)) + ((7 / 3) * (9 + 3)) + ((7 - 0) + (5 / 9)) + ((6 / 8) - (2 + 0)) + ((0 + 6) / 4) + ((9 - 5) - (3 - 9)) + ((0 + 1) + (9 - 4)) + ((7 - 7) * (1 - 8)) + ((7 - 1) + 9) + ((4 - 0) + (0 * 8)) + ((6 / 9) * (2 - 2)) + ((5 - 6) - (8 / 4)) + ((3 * 5) / (4 + 2)) + ((3 * 4) - (5 + 2)) + ((7 - 1) + (8 / 8)) + ((4 * 0) - (9 + 7)) + ((3 / 6) - (4 / 3)) + ((0 - 2) - (1 / 9)) + ((0 - 8) * (8 * 0)) + ((0 / 1) * (2 / 8)) + ((9 + 5) * (8 / 3)) + ((1 + 8) / (4 - 9)) + ((0 * 6) * (2 + 4)) + ((5 / 6) + (2 + 0)) + ((2 * 7) - (2 / 2)) + ((8 + 8) * 2)$ \\[6pt]
Result: $\frac{707449}{1260}$

\textbf{Deep arithmetic expression}: \\
$
(((((((3 + 8) + (5 - 1)) - ((1 - 6) + (5 + 3))) + (((8 - 2) - (3 - 8)) + ((2 * 9) * (4 + 5)))) * ((((1 / 7) - (6 * 4)) * ((7 + 3) * (6 + 6))) - (((8 * 3) * (1 + 8)) + ((5 - 9) + (7 / 1))))) + (((((8 * 6) / (5 - 3)) * ((8 * 0) - (8 - 0))) + (((8 - 9) + (3 - 6)) - ((9 / 8) / (7 * 8)))) / ((((7 - 4) * (2 + 2)) - ((3 - 5) / (9 - 2))) / (((6 / 8) + (5 * 5)) * ((4 - 1) - (8 + 8)))))) - ((((((4 - 0) / (4 - 8)) * ((8 - 0) - (3 - 1))) + (((7 * 7) * (4 / 7)) * ((7 * 0) - (0 / 7)))) - ((((8 * 6) / (8 + 7)) - ((8 / 8) + (8 / 4))) - (((5 + 5) * (9 * 8)) - ((9 / 2) / (3 - 9))))) + (((((9 - 8) + (2 * 1)) - ((4 + 3) / (9 - 5))) / (((2 * 2) * (4 * 3)) - ((6 - 6) - (6 + 9)))) * ((((8 / 8) - (3 * 3)) / ((8 + 0) + (9 / 1))) * (((2 * 1) * 6) * ((1 + 5) / 8)))))) $ \\[6pt]
Result: $\frac{892410719}{448320600}$

\twocolumn

\subsection{Recursive Decomposition Experiments}
\label{subseq:recursive_decomp}

Figure \ref{fig:impact_prob_of_recursion} shows experimentally the results from Theorem \ref{theorem:expectedrecurrence} on a simple CFG with two rules:
$$
S \rightarrow x \;\; (p), \quad S \rightarrow (S~\text{and}~S) \;\;(1-p)
$$

\begin{figure}[h!]
    \centering
    \includegraphics[width=\linewidth]{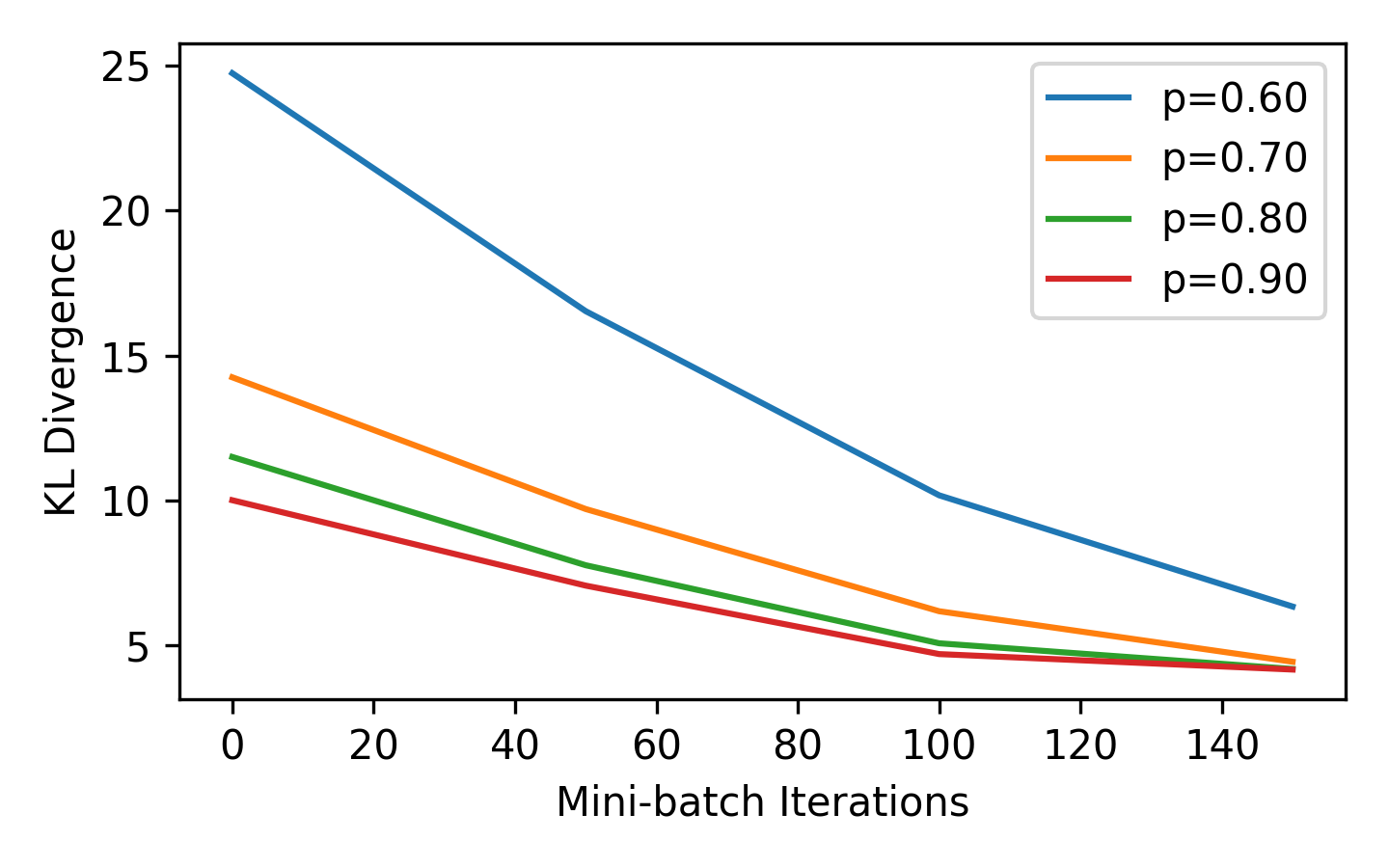}
    \caption{Two-layer Transformer showing the impact of the probability of recursion.}
    \label{fig:impact_prob_of_recursion}
\end{figure}

\subsection{Pretraining Results}
\label{subseq:pretraining_results}

This appendix provides the detailed results referenced in the main text. All experiments compare transformers trained from scratch against those pretrained on a subgrammar before continuing on the full grammar.


Figure~\ref{fig:twolayer_direct_vs_continued} illustrates the distribution of KL-divergences across 30 seeds when training directly versus with 10 epochs of subgrammar pretraining. Pretraining consistently shifts the distribution toward lower KL. 

\begin{figure*}[h!]
    \centering
    \begin{subfigure}[t]{0.45\linewidth}
        \centering
        \includegraphics[width=\linewidth]{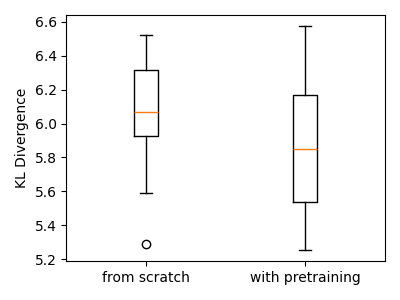}
        \label{fig:boxplot_twolayer}
        \caption{Two-layer Transformer}
    \end{subfigure}
    \hfill
    \begin{subfigure}[t]{0.45\linewidth}
        \centering
        \includegraphics[width=\linewidth]{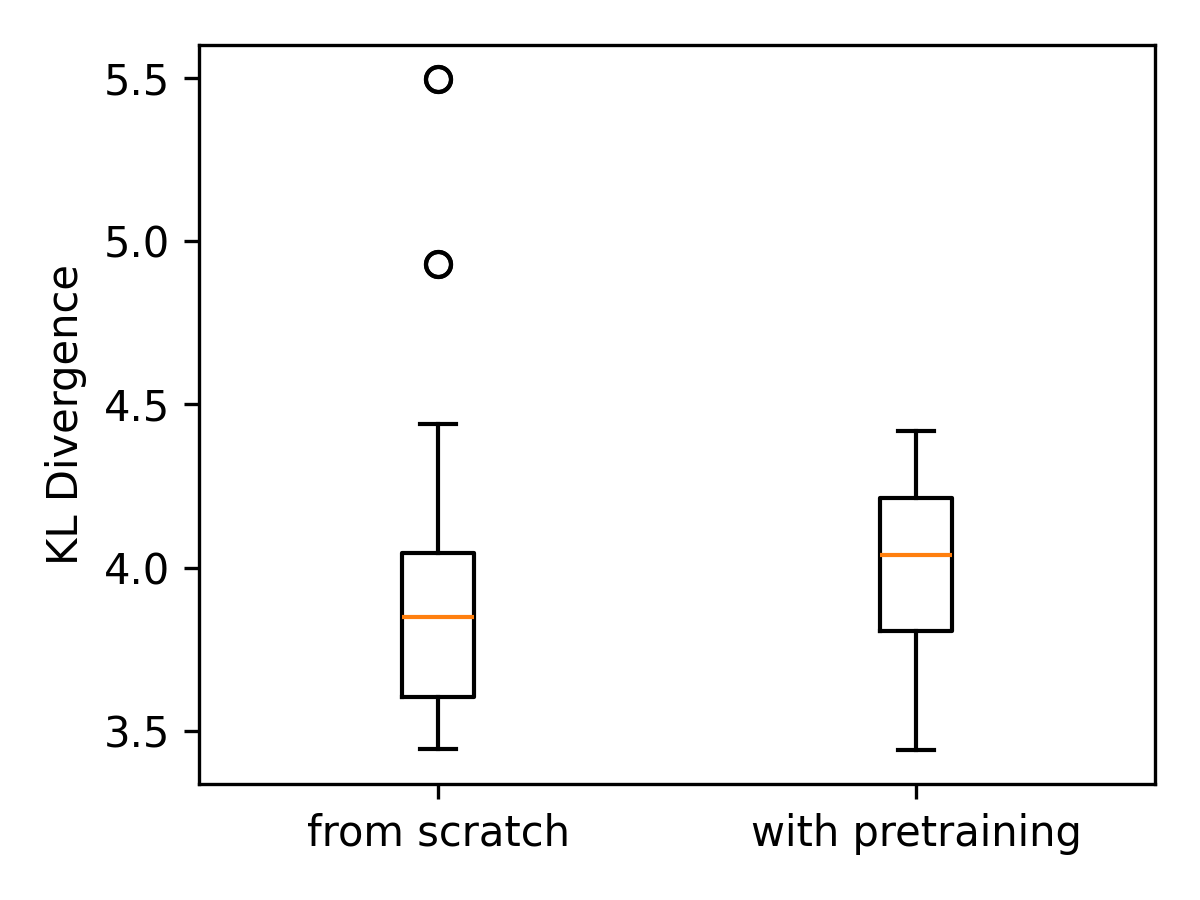}
        \label{fig:boxplot_fourlayer}
        \caption{Four-layer Transformer}
    \end{subfigure}
    \caption{Distribution of final KL value of pretraining versus training from scratch}
    \label{fig:twolayer_direct_vs_continued}
\end{figure*}

Table~\ref{tab:cos_sim} reports average cosine similarity across attention and MLP layers, on three types of test sequences: (i) sequences consisting solely of subgrammar subsequences, (ii) sequences with no subgrammar subsequences, and (iii) sequences mixing subgrammar and other subsequences.

\begin{table}[h!]
    \centering
    \small
    \begin{tabular}{l||cc}
        \toprule
        & Attention & MLP \\
        \midrule
        \multicolumn{1}{l}{\textbf{Sequences with subgrammar only}} & &\\
        From Scratch     & 0.660 & 0.635 \\
        With Pretraining & 0.743 & 0.611 \\
        \emph{Percentage change (\%)} & \emph{+12.6} & \emph{-3.9} \\
        \midrule
        \multicolumn{1}{l}{\textbf{Sequences without subgrammar}} & & \\
        From Scratch     & 0.835 & 0.837 \\
        With Pretraining & 0.876 & 0.841 \\
        \emph{Percentage change (\%)} & \emph{+4.9} & \emph{+0.5} \\
        \midrule
        \multicolumn{3}{l}{\textbf{Sequences with subgrammar}} \\
        From Scratch     & 0.726 & 0.501 \\
        With Pretraining & 0.687 & 0.543 \\
        \emph{Percentage change (\%)} & \emph{-5.7} & \emph{+8.4} \\
        \bottomrule
    \end{tabular}
    \caption{Average cosine similarity [-1, +1] across attention and MLP layers of a two-layer Transformer when pretraining for 10 epochs. \emph{Percentage change} indicates the relative difference between models trained from scratch and with pretraining. }
    \label{tab:cos_sim}
\end{table}

\begin{table*}[h]
    \centering
    \small
    \begin{tabular}{l||cc||cc}
        \toprule
        & \multicolumn{4}{c}{Small Two-layer Transformer} \\
        \cmidrule{2-5}
        & \multicolumn{2}{c||}{Pretraining 2 epochs} 
        & \multicolumn{2}{c}{Pretraining 4 epochs} \\
        \cmidrule{2-5}
         & Attention & MLP & Attention & MLP \\
        \midrule
        \multicolumn{5}{l}{\textbf{Full grammar sequences}} \\
        From Scratch     & 0.208 & 0.311 & 0.214 & 0.318 \\
        With Pretraining & 0.220 & 0.332 & 0.229 & 0.338 \\
        \emph{Percentage change (\%)} & \emph{+5.8} & \emph{+6.8} & \textcolor{teal}{\emph{+7.0}} & \emph{+6.3} \\
        \midrule
        \multicolumn{5}{l}{\textbf{Subgrammar sequences}} \\
        From Scratch     & 0.229 & 0.467 & 0.236 & 0.465 \\
        With Pretraining & 0.252 & 0.474 & 0.262 & 0.488 \\
        \emph{Percentage change (\%)} & \textcolor{teal}{\emph{+10.1}} & \emph{+1.5} & \textcolor{teal}{\emph{+11.0}} & \emph{+4.9} \\
        Subgrammar pretraining only & 0.248 & 0.543 & 0.593 & 0.944 \\
        \bottomrule
    \end{tabular}
    \caption{Average linear CKA similarity (0--1) across attention and MLP layers of different, independently trained transformers when pretraining for 2 vs. 4 epochs. After pretraining, the models were trained for additional 10 epochs. The average was computed over off-diagonal seed pairs (30 seeds; 435 pairs) on the evaluation set. The most significant differences are in \textcolor{teal}{teal}. This experiment was run using \texttt{ABC Grammar}.}
    \label{tab:cka2}
\end{table*}

\subsection{Generalization and Prefix Experiments}
\label{subseg:prefix_experiment}

This appendix provides the  figure referenced in the main text. 
Figure~\ref{fig:recursions_with_prefix_logit_differences} compares the model's prediction error across three prefix conditions: (a) a valid terminal sequence $(a)(a)(a)(a)(a)(a)$, (b) a valid nested bracket sequence $((((((((a))))))))$, and (c) an invalid non-grammatical sequence $(a)(a)(a))(aa)(a)(a)$. All three conditions yield nearly identical error curves, reaching approximately 0.16-0.17 at depth 200. This demonstrates that the model's performance is primarily determined by the depth of the recursive structure it must complete, with minimal influence from the prefix context or its grammatical validity.

\begin{figure*}[h!]
    \centering
        \begin{subfigure}[t]{0.33\linewidth}
        \centering
        \includegraphics[width=\linewidth]{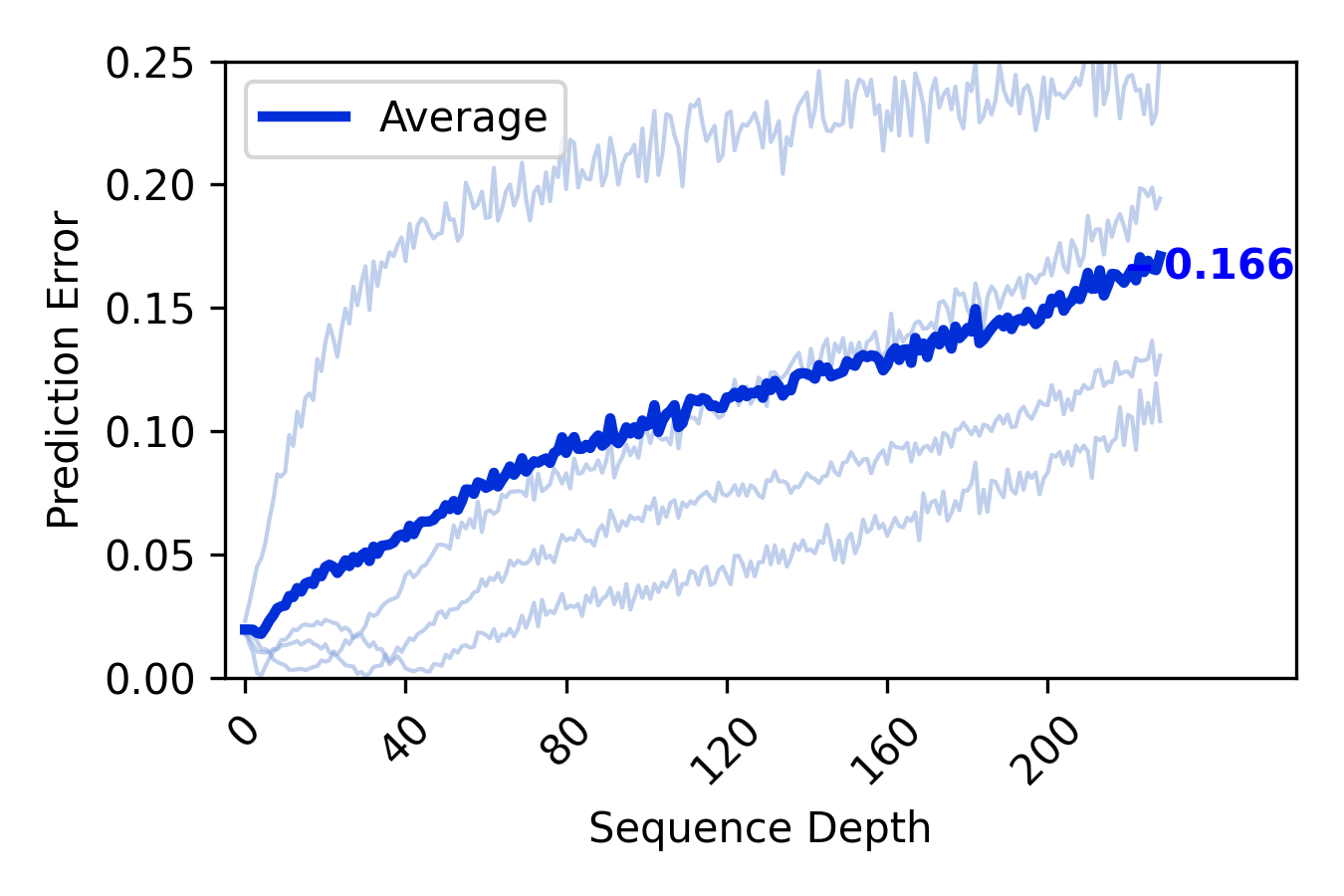}
        \caption{Prefix = $(a)(a)(a)(a)(a)(a)$}
        \label{fig:recursions_case3_2}
    \end{subfigure}
    \hfill
    \begin{subfigure}[t]{0.32\linewidth}
        \centering
        \includegraphics[width=\linewidth]{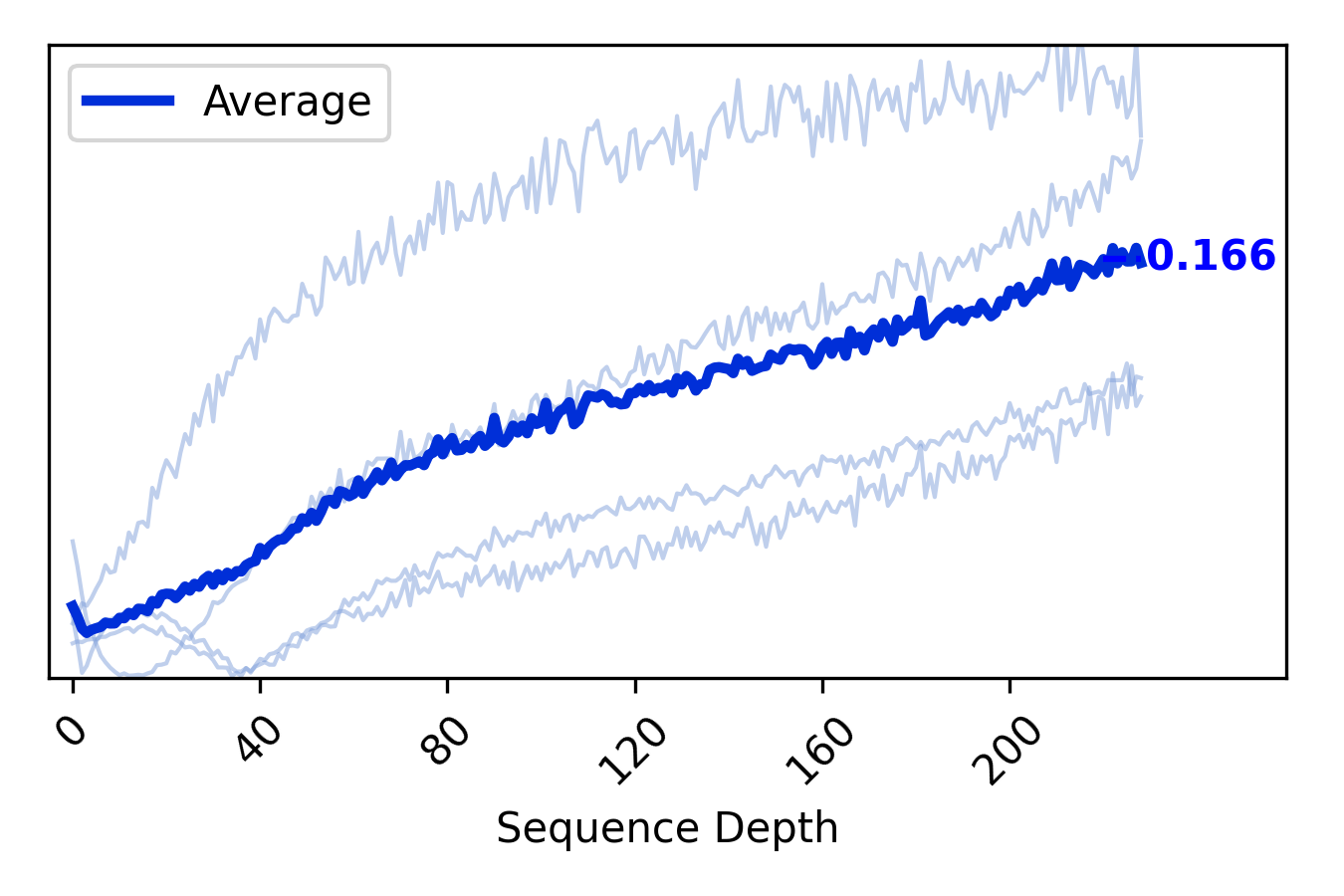}
        \caption{Prefix = $(((((((((a))))))))$}
        \label{fig:recursions_case3_1}
    \end{subfigure}
    \hfill
    \begin{subfigure}[t]{0.32\linewidth}
        \centering
        \includegraphics[width=\linewidth]{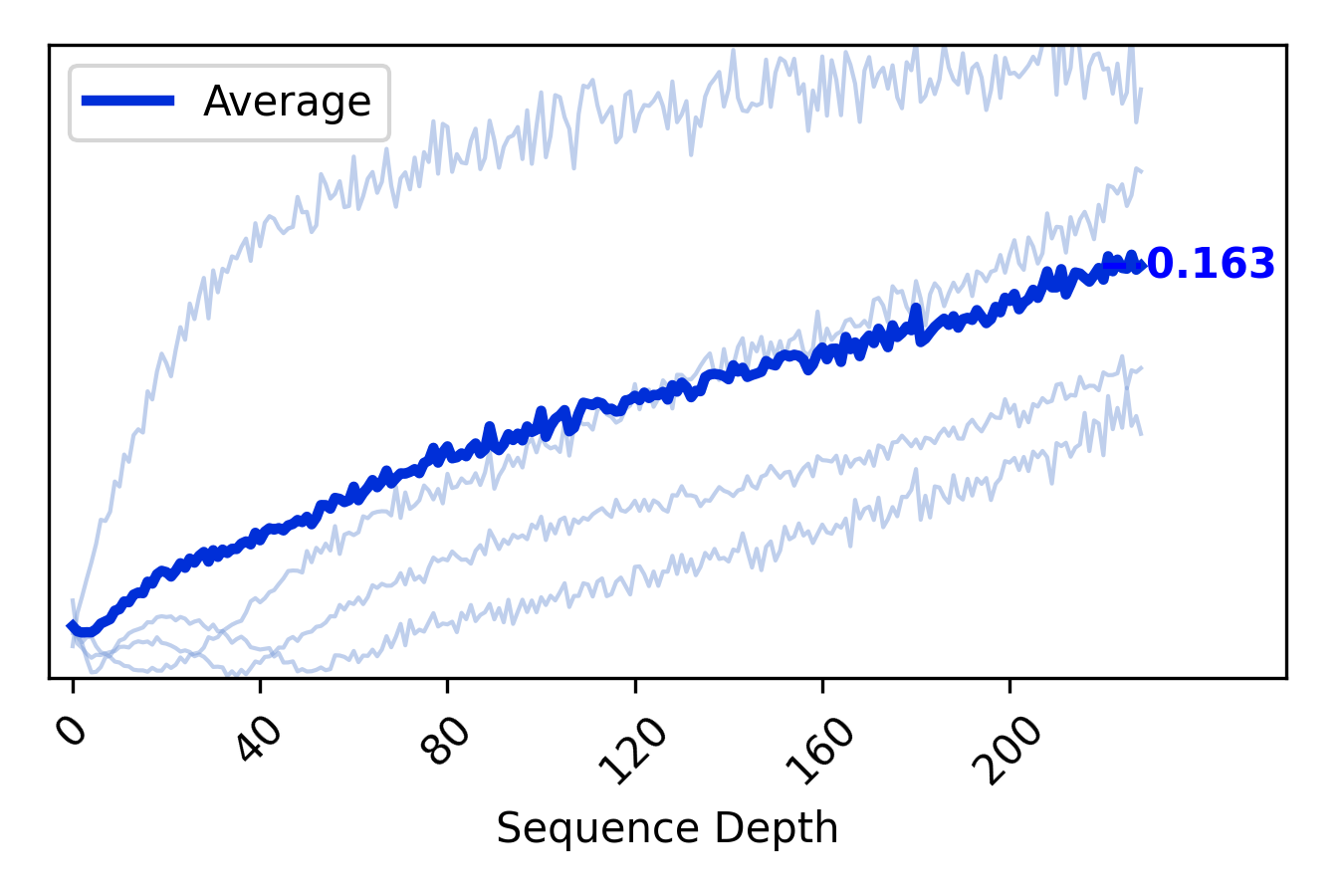}
        \caption{Wrong prefix = $(a)(a)(a))(aa)(a)(a)$}
        \label{fig:recursions_case3_3}
    \end{subfigure}

    \caption{Comparison of different prefixes for recursion type 2}
    \label{fig:recursions_with_prefix_logit_differences}
\end{figure*}

\end{document}